%% file: example_paper.tex
\documentclass{article}

\usepackage{microtype}
\usepackage{graphicx}
\usepackage{subcaption}
\usepackage{booktabs} 
\usepackage[dvipsnames]{xcolor}
\usepackage{comment}
\usepackage{hyperref}

\usepackage[accepted]{icml2026}

\usepackage{amsmath}
\usepackage{amssymb}
\usepackage{mathtools}
\usepackage{amsthm}

\usepackage{pifont}
\usepackage[capitalize,noabbrev]{cleveref}

\theoremstyle{plain}
\newtheorem{theorem}{Theorem}[section]
\newtheorem{proposition}[theorem]{Proposition}
\newtheorem{lemma}[theorem]{Lemma}
\newtheorem{corollary}[theorem]{Corollary}
\theoremstyle{definition}

\theoremstyle{remark}

\newcommand{\myhighlight}[1]{\colorbox{cyan!15}{\parbox{\dimexpr\linewidth-2\fboxsep}{\strut #1\strut}}}
\usepackage[textsize=tiny]{todonotes}

\AtBeginDocument{%
  \setcitestyle{notesep={; }}%
}

\usepackage{enumerate}
\usepackage{pdfsync}
\usepackage[OT1]{fontenc}
\usepackage{natbib}
\usepackage{macro_commands}
\usepackage{pbox}
\usepackage{setspace}
\usepackage{tabularx}
\usepackage{float}
\usepackage{wrapfig,lipsum}
\usepackage{enumitem}
\usepackage{graphicx}
\usepackage{pgfplots}
\usepackage{mathtools}
\usepackage{caption}
\usetikzlibrary{arrows, shapes,snakes, automata,backgrounds,petri}
\usepackage{booktabs} 

\usepackage{graphicx} 
\usepackage{pdfpages} 
\usepackage{hyperref}

\usepackage{color, colortbl}
\usepackage{xcolor}
\definecolor{LightCyan}{rgb}{0.8, 0.9, 1}

\begin{document}

\twocolumn[
  \icmltitle{\textit{S}-SPPO: Semantic-Calibrated Self-Play Preference Optimization}



  \icmlsetsymbol{equal}{*}

  \begin{icmlauthorlist}
    \icmlauthor{Xiwen Chen}{comp,equal}
    \icmlauthor{Wenhui Zhu}{asu,equal}
    \icmlauthor{Jingjing Wang}{sch,equal}
    \icmlauthor{Peijie Qiu}{was,equal}
    \icmlauthor{Zhipeng Wang}{Rice}
    \icmlauthor{Huayu Li}{az}
    \icmlauthor{ZhengXiao He}{az}
    \icmlauthor{Xuanzhao Dong}{asu}
    \icmlauthor{Prayag Tiwari}{Halmstad}
    \icmlauthor{Mingkun Xu}{Guangdong}
    \icmlauthor{Yujian Xiong}{asu}
    \icmlauthor{Feng Luo}{sch}
    \icmlauthor{Abolfazl Razi}{sch}
    \icmlauthor{Brendan Hogan Rappazzo}{comp}
    \icmlauthor{Anderson Schneider}{comp}
    \icmlauthor{Yuriy Nevmyvaka}{comp}
\end{icmlauthorlist}

  \icmlaffiliation{az}{University of Arizona, USA}
  \icmlaffiliation{asu}{Arizona State University, USA}
  \icmlaffiliation{comp}{Morgan Stanley, USA}
  \icmlaffiliation{Rice}{Now at Google LLC, work done at Rice University}
  \icmlaffiliation{sch}{Clemson University, USA}
  \icmlaffiliation{was}{Washington University in St. Louis, USA}
  \icmlaffiliation{Halmstad}{Halmstad University, Sweden}
  \icmlaffiliation{Guangdong}{Guangdong Institute of Intelligence Science and Technology, China}


  \icmlcorrespondingauthor{Xiwen Chen}{{xiwenchen.cs@outlook.com}}
  \icmlcorrespondingauthor{Wenhui Zhu}{{wzhu59@asu.com}}

  \icmlkeywords{Machine Learning, ICML}

  \vskip 0.3in
]

\printAffiliationsAndNotice{}  

\begin{abstract}
Aligning Large Language Models (LLMs) with human preferences is often formulated via Direct Preference Optimization (DPO). However, the standard Bradley-Terry instantiation of DPO is limited in modeling common departures from transitivity in human preferences. To address this, recent work has introduced Self-Play Preference Optimization (SPPO), which iteratively refines the policy by training on self-generated win-lose pairs. Our investigation, however, reveals a critical instability in SPPO: the optimization is prone to \textit{policy degeneration} when the preference oracle assigns overly confident wins to semantically indistinguishable responses. To mitigate this, we propose \textit{S}-SPPO, a dual-space semantic calibration framework comprising: i) \textit{Supervision Calibration} via semantic gating, which anneals win rate targets toward the maximum-entropy baseline as semantic overlap increases; and ii) \textit{Representation Calibration} via latent repulsion to enforce geometric diversity to prevent manifold collapse and maintain latent diversity between chosen and rejected samples. Theoretically, we show that the calibration preserves the constant-sum game structure, facilitating convergence to a Nash Equilibrium. Empirically, \textit{S}-SPPO avoids the performance degradation seen in prior methods, achieving \textbf{52.19\% win rate} and \textbf{47.46\% length-controlled win rate} on AlpacaEval 2.0 with Llama-3-8B, without using additional human-annotated preferences during training. The code will be available at \url{https://github.com/xiwenc1/s-sppo}.
\end{abstract}

\section{Introduction}
\begin{figure}[!t]
    \centering
    \includegraphics[width=1.0\linewidth]{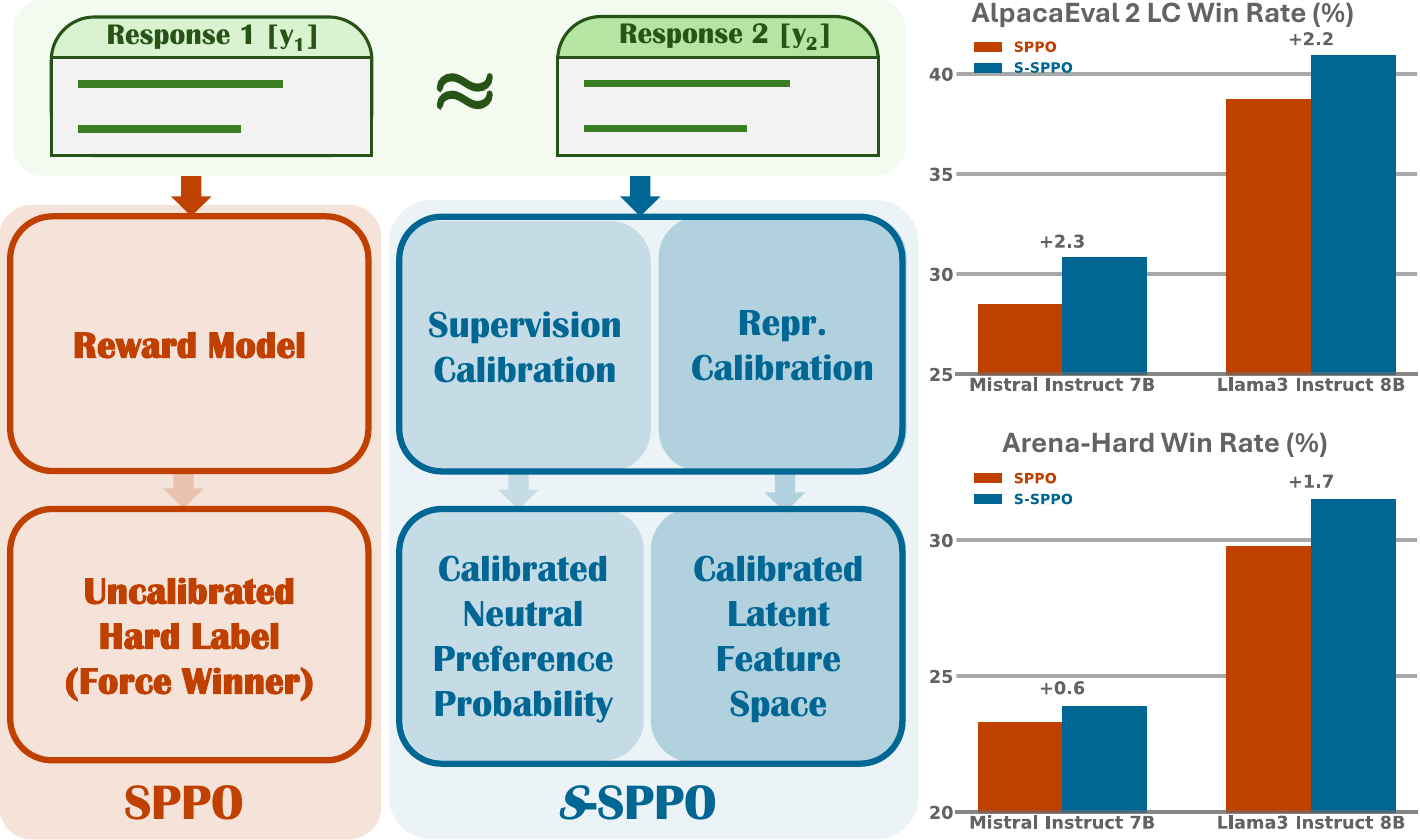}
    \caption{Comparison between \textcolor{MidnightBlue}{\textit{S}-SPPO} and \textcolor{Mahogany!80}{SPPO}. When responses ($\yb_1, \yb_2$) are semantically similar, SPPO imposes uncalibrated hard preferences that can be arbitrary, whereas S-SPPO employs \textcolor{MidnightBlue!84}{Supervision} and \textcolor{MidnightBlue!76}{Representation (Repr.)} calibration to produce calibrated preference probabilities (left), resulting in more stable training and consistent empirical improvements (right).}
    \label{fig:teaser}
\end{figure}

The alignment of Large Language Models (LLMs) with human values has become a cornerstone of modern AI development.~\cite{ouyang2022training,bai2022training,touvron2023llama,sun2024aligning,chen2025aha}
The standard paradigm, Reinforcement Learning from Human Feedback~\citep[RLHF,][]{christiano2017deep,ouyang2022training}, typically involves training a reward model to proxy human preferences, followed by optimizing the policy via Proximal Policy Optimization~\citep[PPO,][]{schulman2017proximal}. 
While effective, this pipeline is often criticized for its complexity, training instability, and sensitivity to hyperparameter tuning~\cite{engstrom2020implementation,azar2024general,casper2023open,rafailov2023direct}.

To mitigate these challenges, Direct Preference Optimization~\citep[DPO,][]{rafailov2023direct} emerged as a compelling alternative. 
By reparameterizing the optimal policy in closed form, DPO bypasses the need for an explicit reward model, treating preferences as an implicit reward signal to be optimized directly. 
However, the standard DPO formulation relies on the \textit{Bradley-Terry (BT) model}~\cite{bradley1952rank}, which assumes that preferences satisfy strict transitivity and can be fully captured by a scalar reward function. 
This assumption, however, often fails to account for the \textit{intransitivity} and irrationality inherent in general human preferences~\citep{wu2024self}.

To bridge this gap, Self-Play Preference Optimization~\citep[SPPO,][]{wu2024self} reframes the alignment problem through the lens of game theory. 
Instead of maximizing a static reward, SPPO formulates preference optimization as a two-player constant-sum game between the policy and the environment. 
By approximating the Nash Equilibrium, SPPO theoretically enables the model to handle non-transitive preferences robustly, without being constrained by the limitations of the Bradley-Terry model.

However, SPPO implicitly assumes a well-calibrated preference oracle. Specifically, the algorithm treats estimated win rates as precise regression targets for policy optimization. This assumption breaks down in practice. We observe that practical reward models are often highly overconfident, assigning extreme win probabilities (approaching 0 or 1) even when candidate responses are semantically indistinguishable. \textit{This introduces a critical misalignment in the optimization landscape.} Since SPPO forces the model to fit these overconfident targets, the optimizer effectively tries to minimize the residual between the implicit reward (\textit{i.e.,} log-likelihood ratio) and noise in the preference predictions. Consequently, the training signal becomes dominated by these \textit{false positives} on semantically indistinguishable pairs, aggressively pushing the model to distinguish between highly similar outputs. This artificial scaling of gradient variance destabilizes the self-play trajectory, preventing convergence to a meaningful Nash Equilibrium.

To bridge this gap, we propose Semantic-Calibrated Self-Play Preference Optimization ({\textit{S}-SPPO), a dual-space semantic calibration framework designed to stabilize self-play dynamics. 
Conceptually, we reframe preference optimization as a \textit{Dual-Space Min-Max game}. We identify that stable alignment requires satisfying adversarial constraints: \textit{minimizing} preference divergence in the label space to prevent overfitting noise, while simultaneously \textit{maximizing} representation distance in the latent space to prevent manifold collapse (see Fig.~\ref{fig:teaser}(a)). This leads to consistent performance improvement in various benchmarks (Fig.~\ref{fig:teaser}(b)). 

To summarize, our main contributions are as follows:
\begin{itemize}
    \item  We uncover a critical instability in self-play preference optimization (SPPO), revealing that the optimization is prone to learning instability and performance degradation when the preference oracle assigns overly confident wins to semantically indistinguishable responses.
    
    \item We propose \textit{S}-SPPO, which introduces two novel calibration mechanisms: \textit{i)} \textit{Supervision Calibration} via semantic gating to anneal win rate targets based on semantic overlap; and \textit{ii)} \textit{Representation Calibration}, which applies a repulsive force in the latent space to prevent mode collapse.
    
    \item  We provide a theoretical analysis proving that our semantic calibration preserves the constant-sum game structure, thereby facilitating convergence to a Nash Equilibrium comparable to the original SPPO formulation.
    
   \item Empirically, \textit{S}-SPPO significantly outperforms existing baselines. On Llama-3-8B, it achieves a 52.19\% win rate and a 47.46\% length-controlled win rate on AlpacaEval 2.0 without relying on {additional} human-annotated preference data or supervision from stronger models (e.g., GPT-4) during training, effectively eliminating the performance degradation observed in standard self-play methods.
\end{itemize}

\begin{figure*}[!t]
    \centering
    \includegraphics[width=0.9\linewidth]{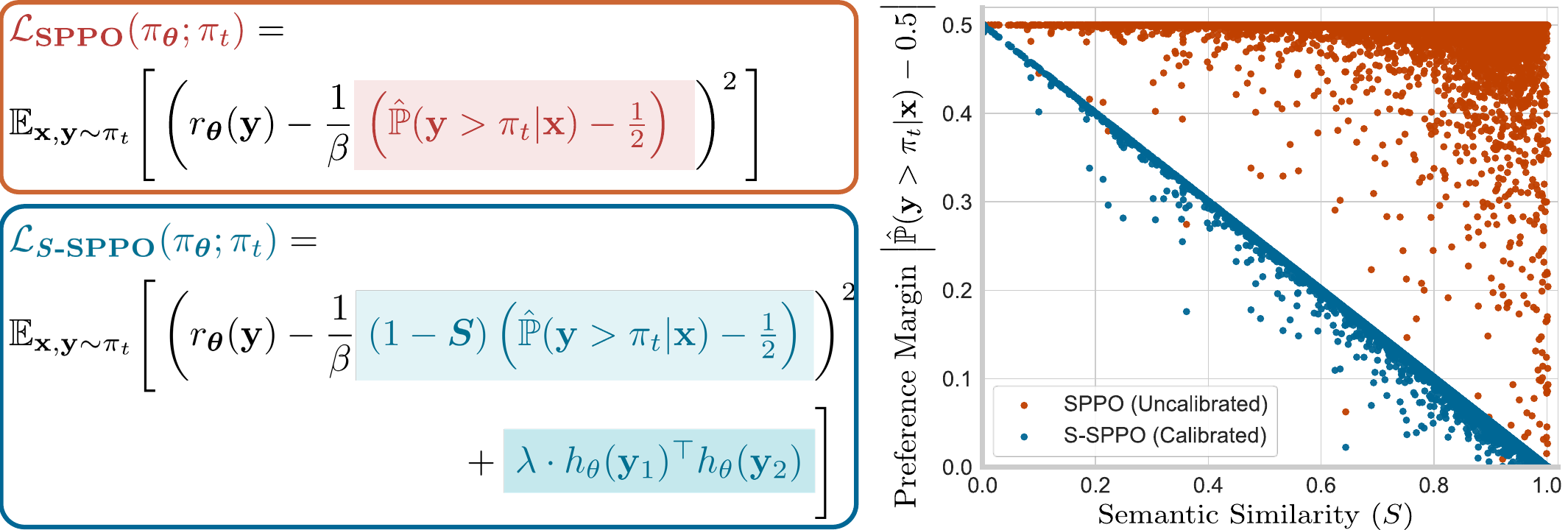}
    \caption{ \textcolor{MidnightBlue}{\textit{S}-SPPO} and \textcolor{Mahogany!80}{SPPO} mainly differ in their preference optimization objectives, as indicated in the shaded boxes (left). The preference margin $|\hat{\PP}(\yb \succ \pi_t | \xb) - 0.5 |$ in \textcolor{Mahogany!80}{SPPO} is uncalibrated, whereas \textcolor{MidnightBlue}{\textit{S}-SPPO} calibrates the preference margin by the semantic similarity $(S)$ between win-lose pairs (right). See Appendix~\ref{app:aba-4} for more details.}
    \label{fig:overview}
\end{figure*}

\section{Background and Limitations of SPPO}
\label{sec:preliminaries}


\subsection{Preference Optimization Landscape}
RLHF~\citep[][]{ouyang2022training} has emerged as a promising paradigm for aligning LLMs with human preference. Traditional RLHF approaches~\citep[\textit{e.g.,}][]{schulman2017proximal} typically follow a two-stage process. Given a prompt $\xb \sim \mathcal{X}$, where $\mathcal{X}$ denotes the prompt distribution, an LLM policy $\pi(\cdot \mid \xb)$ generates a response $\yb \in \mathcal{Y}$. The model aims to update the policy $\pi$ to better match the human-preferred response distribution. The optimization objective is usually formulated as maximizing the expected reward subject to a Kullback–Leibler (KL) divergence constraint:
\begin{equation}
\label{eqn:rlhf_objective}
\begin{aligned}\nonumber
    \EE_{\substack{\xb \sim \cX \\ \yb \sim \pi(\cdot|\xb)}} [r_\phi(\yb; \xb)] - \beta \EE_{\xb \sim \cX} [\mathrm{KL}(\pi(\cdot|\xb) \| \pi_{\text{ref}}(\cdot|\xb))],
\end{aligned}
\end{equation}
where $\pi_{\text{ref}}$ is the reference policy (typically the Supervised Fine-Tuned model) and $\beta$ controls the deviation penalty. While effective, traditional RLHF methods relying on explicit reward modeling often overlook the inherent semantic ambiguity in language.

\textbf{Direct Preference Optimization.}
DPO~\citep[][]{rafailov2023direct} bypasses explicit reward modeling by treating preferences as an implicit reward signal, substantially simplifying RLHF for preference optimization. As is standard, we assume access to an offline preference dataset $\mathcal{D}=\{(\xb, \yb_1, \yb_2)\}$ comprising preferred response $\yb_1$ and dispreferred response $\yb_2$ for prompt $\xb$.
DPO maximizes the log-likelihood $\PP(\yb_1 \succ \yb_2 | \xb)$ under the Bradley-Terry (BT) preference model~\citep[][]{bradley1952rank}.
However, the off-policy DPO is known to fall short in capturing the intransitivity and irrationality in human preferences~\cite{wu2024self}.

\textbf{Self-Play Preference Optimization.}
SPPO~\citep{wu2024self} mitigates DPO’s limitations by optimizing preference probabilities directly in a self-play setting. It recasts preference optimization as a two-player constant-sum game between a policy $\pi$ and an opponent $\pi'$, defining the expected win rate as
\begin{equation}
\label{eqn:policy_win_rate}\nonumber
    \PP(\pi \succ \pi' | \xb) \triangleq \EE_{\substack{\yb \sim \pi(\cdot|\xb) \\ \yb' \sim \pi'(\cdot|\xb)}} \big[ \PP(\yb \succ \yb' | \xb) \big].
\end{equation}
The policy targets the \textit{von Neumann winner} $\pi^*$, which is the Min–Max optimal strategy maximizing the win rate against any adversary:
\begin{equation}
\label{eqn:nash_eq}\nonumber
    (\pi^*, \pi^*) = \arg \max_{\pi} \min_{\pi'} \EE_{\xb \sim \cX} [\PP(\pi \succ \pi' | \xb)].
\end{equation}
To approximate $\pi^*$, SPPO iteratively updates the policy. At iteration $t$, an online preference dataset $\mathcal{D}_t$ is constructed by generating $K$ responses and annotating win-lose pairs using an off-the-shelf reward model~\citep[\textit{e.g.,} PairRM,][]{jiang2023llm}.
Rather than maximizing a contrastive margin, SPPO treats the empirical win rate as a regression target. It updates the policy by minimizing the $L_2$ distance between the implicit reward $r_{\btheta}(\yb)$ (log-likelihood ratio) and the shifted win rate:
\begin{equation}
\label{eqn:sppo_loss}
\begin{split}
    \cL_{\textbf{SPPO}}(\pi_{\btheta}; \pi_t) & = \EE_{\mathcal{D}_t} \Bigg[ \Bigg( \underbrace{\log \frac{\pi_{\btheta}(\yb|\xb)}{\pi_t(\yb|\xb)}}_{r_{\btheta}(\yb)} \\
    & \quad \quad - \frac{1}{\beta} \Big( \hat{\PP}(\yb \succ \pi_t | \xb) - \frac{1}{2} \Big) \Bigg)^2 \Bigg].
\end{split}
\end{equation}
This objective effectively regresses the policy's log-ratio towards the preference probability, pushing the likelihood of winning responses away from losing ones to approximate the Nash Equilibrium dynamics.

\subsection{Limitations of SPPO}
The success of SPPO hinges on the reliability of the preference supervision signals $\mathcal{D}_t$ used for policy update at every iteration. 
Ideally, the preference oracle should reflect the underlying \textit{epistemic uncertainty}: when two candidate responses are semantically indistinguishable, the estimated win rate should naturally converge to the neutral point $0.5$. In other words, the \textit{preference margin} $m(\yb) = |\hat{\PP}(\yb \succ \pi_t | \xb) - 0.5 |$ should approach zero. To reveal this, we empirically investigate the relationship between the semantic similarity between win-lose pairs and the preference margin.
However, we uncover a critical misalignment between the semantic similarity ($S$) of win–lose pairs and the reward model’s induced preferences (Fig.~\ref{fig:overview}(right)). In particular, the reward model (e.g., PairRM) is poorly calibrated: standard SPPO continues to produce large preference margins even when the two responses are semantically indistinguishable (e.g., $S>0.9$). 
As a result, the preference signal of SPPO is not aligned with semantic similarity, as highly similar responses can still be assigned extreme win probabilities, causing the optimization to treat negligible differences as substantive and to apply overly strong policy updates.
This behavior indicates that discrete preference supervision fails to respect the inherently continuous structure of semantic similarity, effectively imputing substantial quality gaps where none exist. Consequently, the SPPO training signal can become decoupled from semantic consistency, degrading learning dynamics: to match high-confidence targets on ambiguous pairs, the policy may overfit to spurious cues or reward-model noise. Moreover, magnifying these high-magnitude errors can increase gradient variance, destabilize optimization, and hinder convergence toward a semantically consistent Nash Equilibrium. The gradient analysis of SPPO reveals this fact
\begin{equation}
    \nabla_{\btheta} \cL_{\textbf{SPPO}}(\pi_{\btheta}; \pi_t) = \underbrace{2\left(r_{\btheta}(\yb) - \frac{ m(\yb)}{\beta}\right)}_{w(\yb)} \cdot \nabla_{\btheta} \log \pi_{\btheta}(\yb|\xb),
\end{equation}
where the term $\nabla_{\btheta} \log \pi_{\btheta}(\yb|\xb)$ increases the log-probability of the chosen response, and the magnitude of $w(\yb)$ determines how hard SPPO pushes towards that direction.
This analysis highlights the potential for performance degradation, which is consistent with the downward trends observed in Table~\ref{tab:mt-bench} and Table~\ref{tab:arena-hard}.

\section{Method: Dual-Space Semantic Calibration}
\label{sec:method}


Building on the analysis above, we propose \textit{S}-SPPO, a framework that casts preference optimization as a \textit{Dual-Space Min–Max game}. The framework enforces adversarial constraints at two complementary levels: \textit{i)} Minimization: in the label space, Supervision Calibration minimizes preference divergence on ambiguous pairs, mitigating supervision bias; and \textit{ii)} Maximization: in the latent space, Representation Calibration enforces geometric diversity between responses, preventing manifold collapse. Together, this push-pull dynamic stabilizes the policy under noisy supervision while preserving the expressiveness required for better alignment.

\subsection{Supervision Calibration via Semantic Gating}
\label{sec:sup_calibration}
The raw empirical win rate $\hat{\PP}$, computed from reward-model scores, is often poorly calibrated with respect to semantic utility. To mitigate this, we introduce a semantic gating mechanism. Let $\phi(\cdot)$ denote a model-agnostic semantic encoder~\citep[][]{wang2020minilm,gunther2023jina,nussbaum2024nomic}, and define the \textit{rectified semantic similarity} between responses $\yb_i$ and $\yb_j$ as follows:
\vspace{-0.05in}
\begin{equation}\nonumber
    S_{ij} = \max \left( 0, \frac{\phi(\yb_i)^\top \phi(\yb_j)}{\| \phi(\yb_i) \| \| \phi(\yb_j) \|} \right).
\end{equation}
We then obtain a calibrated target probability $\hat{\PP}_c$ by annealing $\hat{\PP}$ toward the maximum-entropy baseline (i.e., $0.5$)\footnote{Following~\citet{wu2024self}, we set the baseline target to $0.5$. While originally derived as a normalization under random preferences, in our semantic calibration setting it naturally corresponds to the maximum-entropy case with no preference between semantically indistinguishable responses.}, in proportion to each candidate’s average semantic overlap with the other responses:
\vspace{-0.05in}
\begin{equation}
\label{eqn:calibration_formula}
\begin{aligned}
    \hat{\PP}_c(\yb_i \succ \pi_t | \xb) = \frac{1}{2} &+ \bigg( 1 - \frac{1}{K-1} \sum_{j \neq i} S_{ij} \bigg) \\
    & \quad \quad \times \bigg( \hat{\PP}(\yb_i \succ \pi_t | \xb) - \frac{1}{2} \bigg).
\end{aligned}
\end{equation}
This formulation ensures the supervision signal is calibrated to semantic similarity: gradients are masked ($ \hat{\PP}_c \to 0.5$) when a response is semantically indistinguishable from its competitors, preventing the policy from fitting label noise.

\noindent\textbf{Practical Implementation.} 
While Eq.~\eqref{eqn:calibration_formula} provides a general formulation considering the semantic density over $K$ samples, computing the full similarity matrix can be computationally expensive for large $K$. In our practical implementation (Algorithm~\ref{alg:s_sppo}), we align with the standard SPPO training paradigm, which samples a winner-loser pair $(\yb_w, \yb_l)$ for optimization. 
Accordingly, we approximate the average semantic overlap using the specific similarity between the selected pair, i.e., $S \approx S_{wl}$. 
This serves as an efficient Monte Carlo estimate of the semantic ambiguity, ensuring that the calibrated target $\hat{\PP}_c$ effectively penalizes overconfidence when the chosen pair is semantically ambiguous.

\noindent\textbf{Connection to Tilted-ERM.}
Our approach relates to Tilted-ERM~\citep{li2020tilted} in suppressing outliers. However, unlike TERM which performs blind re-weighting based on loss magnitude, \textit{S}-SPPO utilizes semantic similarity as an informative prior. This selectively dampens signals for ambiguous pairs while retaining supervision for semantically distinct yet challenging instances.

\subsection{Representation Calibration via Latent Repulsion}
\label{sec:rep_calibration}

While Supervision Calibration mitigates label noise, frequent annealing of target probabilities to 0.5 introduces a regularization trade-off: the policy may lose discriminative power. This creates a risk of \textit{manifold collapse}, where distinct generation trajectories map to near-identical latent representations due to gradient starvation.
We propose Representation Calibration to enforce geometric diversity. Even for semantically ambiguous pairs, the model must maintain distinct internal states for different trajectories. Let $h_{\btheta}(\yb)$ be the normalized final hidden state of an LLM. We define the calibration term on sampled pairs $(\yb_1, \yb_2)$ as:
\begin{equation}\label{eqn:repr}
    \cL_{\text{Rep}}(\btheta) = \EE_{(\xb, \yb_1, \yb_2) \sim \cD_t} \Big[ h_{\btheta}(\yb_1)^\top h_{\btheta}(\yb_2) \Big].
\end{equation}
Minimizing this term applies a repulsive force between the feature vectors of the chosen and rejected responses. Geometrically, this acts as a soft repulsion constraint, preventing the latent representations of distinct responses from collapsing into a single point, thereby maintaining the necessary volume in the embedding space for effective learning.

\noindent\textbf{Final Objective.}
The dual-space \textit{S}-SPPO objective at the $t$-th iteration is a weighted sum of (\ref{eqn:calibration_formula}) and (\ref{eqn:repr}):
\begin{equation}
\label{eqn:final_objective}\nonumber
\begin{split}
    & \cL_{\textbf{\textit{S}-SPPO}}(\pi_{\btheta}; \pi_t) = \EE_{\mathcal{D}_t} \Bigg[  \Bigg( r_{\btheta}(\yb) -   \\  &\quad \quad \frac{1}{\beta} \left( \hat{\PP}_c(\yb \succ \pi_t | \xb) - \frac{1}{2} \right) \Bigg)^2   + \lambda \cdot  h_{\btheta}(\yb_1)^\top h_{\btheta}(\yb_2)  \Bigg].
\end{split}
\end{equation}
where $\lambda$ is a weight balance parameter.

\paragraph{Rationale: Decoupling Calibration Sources.}
Crucially, our framework employs distinct metric sources to match the objective of each space. For Supervision Calibration, we utilize an \textit{external} embedding model~\citep{wang2020minilm,gunther2023jina,nussbaum2024nomic} to establish a {model-agnostic anchor}, ensuring that semantic targets remain objective and stable throughout training. Conversely, Representation Calibration operates directly on the \textit{policy's own} hidden states. Since manifold collapse is an {intrinsic degeneration} of the feature space, we explicitly intervene within the policy's latent geometry to enforce diversity where it matters most.

\subsection{Theoretical Analysis}
\label{sec:theory}

As discussed in Section~\ref{sec:preliminaries}, standard SPPO formulates preference optimization as a symmetric two-player constant-sum game at the \emph{policy level}, where the payoff is defined by the expected win rate $\PP(\pi \succ \pi' \mid \xb)$. Its convergence guarantees rely critically on the constant-sum structure (i.e., antisymmetry) of this game. \textit{S}-SPPO modifies the empirical preference signal through semantic calibration. A natural theoretical concern is whether this calibration breaks the constant-sum property, potentially destabilizing the Nash Equilibrium. In this section, we rigorously prove that our semantic calibration mechanism preserves this essential game-theoretic structure, thereby suggesting that \textit{S}-SPPO inherits the convergence guarantees of standard SPPO.

\subsubsection{Policy-Level Constant-Sum Preservation}

We analyze the properties of the calibrated preference probability $\hat{\PP}_c$ in the general pairwise setting.

\begin{lemma}[Policy-Level Constant-Sum Preservation]
\label{lem:policy_constant_sum}
Consider the pairwise calibrated preference probability $\hat{\PP}_c$ consistent with the update rule in Eq.~\eqref{eqn:calibration_formula}. For any pair of policies $(\pi, \pi')$ and any prompt $\xb$, the induced policy-level win rates satisfy:
\begin{equation}
\PP_c(\pi \succ \pi' \mid \xb) + \PP_c(\pi' \succ \pi \mid \xb) = 1,
\end{equation}
where $\PP_c(\pi \succ \pi' \mid \xb) \triangleq \EE_{\yb \sim \pi, \yb' \sim \pi'} [\hat{\PP}_c(\yb \succ \yb' \mid \xb)]$.
Consequently, semantic calibration induces a valid symmetric constant-sum game at the policy level.
\end{lemma}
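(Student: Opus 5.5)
The argument is a pointwise antisymmetry computation, lifted to policies by linearity of expectation.

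First I fix the pairwise form of the calibration consistent with Eq.~\eqref{eqn:calibration_formula}. For a single pair $(\yb,\yb')$, the average semantic overlap reduces to the pairwise similarity $S(\yb,\yb')$, as in the practical implementation. This gives
\begin{equation}\nonumber
\hat{\PP}_c(\yb \succ \yb' \mid \xb) = \tfrac12 + \bigl(1 - S(\yb,\yb')\bigr)\bigl(\hat{\PP}(\yb \succ \yb' \mid \xb) - \tfrac12\bigr).
\end{equation}
I will use two properties of the ingredients. The first is that the rectified cosine similarity is symmetric, $S(\yb,\yb') = S(\yb',\yb)$. This holds because the inner product is symmetric and the $\max(0,\cdot)$ rectification preserves symmetry. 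The second is that the underlying oracle satisfies $\hat{\PP}(\yb \succ \yb' \mid \xb) + \hat{\PP}(\yb' \succ \yb \mid \xb) = 1$. This is the standing assumption of SPPO, under which the original game is constant-sum.

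The key step is a pointwise identity. I add the calibrated expressions for $(\yb,\yb')$ and $(\yb',\yb)$. Since the gate $1-S$ is the same for both orderings, it factors out, and the sum equals
\begin{equation}\nonumber
1 + \bigl(1-S(\yb,\yb')\bigr)\bigl(\hat{\PP}(\yb\succ\yb'\mid\xb) + \hat{\PP}(\yb'\succ\yb\mid\xb) - 1\bigr) = 1.
\end{equation}
As a side check, $S \in [0,1]$ implies $1-S \in [0,1]$. So $\hat{\PP}_c$ is a convex combination of $\hat{\PP}$ and $\tfrac12$, and therefore remains a valid probability in $[0,1]$. This is needed for the policy-level quantity to be a genuine win rate.

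Finally I lift to policies. By definition, $\PP_c(\pi\succ\pi'\mid\xb) = \EE_{\yb\sim\pi,\yb'\sim\pi'}[\hat{\PP}_c(\yb\succ\yb'\mid\xb)]$. Relabeling the dummy variables, $\PP_c(\pi'\succ\pi\mid\xb) = \EE_{\yb\sim\pi,\yb'\sim\pi'}[\hat{\PP}_c(\yb'\succ\yb\mid\xb)]$, because the draws are independent under the product measure $\pi\otimes\pi'$. Adding the two and using linearity of expectation together with the pointwise identity gives $\EE[1] = 1$. The payoff $\PP_c(\pi\succ\pi'\mid\xb) - \tfrac12$ is then antisymmetric, so the induced game is symmetric and constant-sum.

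The main obstacle is the modeling step, not the algebra. Eq.~\eqref{eqn:calibration_formula} gates by a per-response average over $K$ samples, and that average is generally not symmetric in the pair: the gate for $\yb_i$ differs from the gate for $\yb_j$. Antisymmetry would fail in that form. I will therefore state explicitly that the pairwise version uses a gate depending symmetrically on the pair, namely $S_{ij}$ or any symmetric function of it. Symmetry of the gate is exactly the hypothesis the proof needs.
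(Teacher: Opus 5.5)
Your proposal is correct and follows essentially the same route as the paper: reduce to the pairwise gate $1-S_{ij}$ with $S_{ij}=S_{ji}$, use oracle antisymmetry to get $\hat{\PP}_c(\yb_j \succ \yb_i) = 1 - \hat{\PP}_c(\yb_i \succ \yb_j)$ pointwise, and lift to policies by linearity of expectation. Two of your points are left implicit in the paper's proof: that the $K$-sample averaged gate in Eq.~\eqref{eqn:calibration_formula} is not symmetric in the pair, so the pairwise symmetric gate is a genuine modeling assumption, and that $\hat{\PP}_c \in [0,1]$.
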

\begin{proof}
    See Appendix~\ref{app:proof1}.
\end{proof}

\begin{corollary}[Convergence of the Calibrated Game]
\label{cor:convergence}
Let $\bar{\pi}_T = \frac{1}{T} \sum_{t=1}^T \pi_t$ denote the averaged policy produced by \textit{S}-SPPO (considering the calibrated objective). Under standard realizability assumptions, the duality gap of $\bar{\pi}_T$ with respect to the Nash Equilibrium of the \emph{calibrated} policy-level game decays at a rate of $O(1/\sqrt{T})$.
\end{corollary}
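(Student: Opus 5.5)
The plan is to reduce the corollary to the standard no-regret argument behind SPPO \citep{wu2024self}, applied to the calibrated payoff $\PP_c$. By Lemma~\ref{lem:policy_constant_sum}, the matrix $A(\yb,\yb') = \hat{\PP}_c(\yb \succ \yb' \mid \xb) - \tfrac12$ is antisymmetric. It is also bounded: since $S\in[0,1]$, the gate $1-\frac{1}{K-1}\sum_{j\neq i}S_{ij}$ lies in $[0,1]$, so $\hat{\PP}_c\in[0,1]$ and $|A|\le \tfrac12$. The policy-level game $\max_\pi\min_{\pi'}\EE_{\xb}[\PP_c(\pi\succ\pi'\mid\xb)]$ is therefore a symmetric two-player zero-sum game (after subtracting $\tfrac12$) with value $\tfrac12$. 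It has a symmetric Nash equilibrium $(\pi_c^*,\pi_c^*)$, obtained from the minimax theorem on the simplex over $\mathcal{Y}$ (for each $\xb$, assuming finite or compact response spaces). I take the duality gap to be
\[
\mathrm{DG}(\bar\pi_T)=\max_{\pi}\PP_c(\pi\succ\bar\pi_T)-\min_{\pi'}\PP_c(\bar\pi_T\succ\pi'),
\]
which, by antisymmetry, equals $2\big(\max_\pi \PP_c(\pi\succ\bar\pi_T)-\tfrac12\big)$.

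Next, I identify the update. Under realizability, meaning that the minimizer of the calibrated regression loss (ignoring the $\lambda$ repulsion term, which is the sense of ``considering the calibrated objective'') is attained exactly in the policy class, the first-order optimality condition of the squared loss gives
\[
\pi_{t+1}(\yb\mid\xb)\propto \pi_t(\yb\mid\xb)\exp\!\big(\eta\,\PP_c(\yb\succ\pi_t\mid\xb)\big),\qquad \eta=1/\beta.
\]
The partition function is absorbed because the target is centered at $\tfrac12$; I would verify this by the same argument SPPO uses, now relying on $\EE_{\yb\sim\pi_t}\PP_c(\yb\succ\pi_t)=\tfrac12$, which follows from the lemma. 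This update is exactly multiplicative weights (Hedge) on the loss sequence $\ell_t(\yb)=-\PP_c(\yb\succ\pi_t\mid\xb)\in[-1,0]$.

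The standard Hedge regret bound then gives, for any comparator $\pi$,
\[
\sum_{t=1}^T \PP_c(\pi\succ\pi_t)-\sum_{t=1}^T\PP_c(\pi_t\succ\pi_t)\le \frac{\mathrm{KL}(\pi\|\pi_1)}{\eta}+\frac{\eta T}{8}.
\]
By the lemma, $\PP_c(\pi_t\succ\pi_t)=\tfrac12$, and linearity of $\PP_c$ in its second argument gives $\frac1T\sum_t\PP_c(\pi\succ\pi_t)=\PP_c(\pi\succ\bar\pi_T)$. Hence $\max_\pi\PP_c(\pi\succ\bar\pi_T)-\tfrac12\le \frac{\mathrm{KL}(\pi\|\pi_1)}{\eta T}+\frac{\eta}{8}$, with the KL term bounded by $\log|\mathcal{Y}|$ or by a bounded-support assumption. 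Choosing $\eta\asymp\sqrt{\log|\mathcal{Y}|/T}$ yields $\mathrm{DG}(\bar\pi_T)=O(1/\sqrt T)$, and averaging over $\xb$ preserves the rate.

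The main obstacle is the realizability/identification step. $\hat{\PP}_c$ in Eq.~\eqref{eqn:calibration_formula} depends on the $K$-sample set through the average similarity, so it is not a fixed pairwise kernel. I will first work with the pairwise version $\hat{\PP}_c(\yb\succ\yb')=\tfrac12+(1-S(\yb,\yb'))(\hat{\PP}(\yb\succ\yb')-\tfrac12)$, which is what the lemma treats and which is itself antisymmetric because $S$ is symmetric. I will then argue that the policy-level target is its expectation, and I would state the fixed-pairwise-kernel assumption explicitly. A secondary point is that the step size is tied to $\beta$, so the $O(1/\sqrt T)$ rate requires tuning $\beta\asymp\sqrt T$ as in SPPO.
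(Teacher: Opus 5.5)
You follow the paper's own route, made explicit. Lemma~\ref{lem:policy_constant_sum} gives a bounded antisymmetric payoff, the S-SPPO step is read as multiplicative weights on $\mathbb{P}_c$, and the Freund--Schapire/SPPO regret bound gives the $O(1/\sqrt{T})$ duality gap. Your Hedge computation itself is correct: the self-play term is $\mathbb{P}_c(\pi_t\succ\pi_t)=\tfrac{1}{2}$ by antisymmetry, linearity in the second argument passes to $\bar\pi_T$, and $\eta\propto 1/\sqrt{T}$, hence $\beta\propto\sqrt{T}$.

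The step that fails is the identification of the update. The identity $\mathbb{E}_{y\sim\pi_t}[\mathbb{P}_c(y\succ\pi_t\mid x)]=\tfrac{1}{2}$ does not make the partition function equal to $e^{\eta/2}$. By Jensen, $Z_t=\mathbb{E}_{y\sim\pi_t}[e^{\eta\mathbb{P}_c(y\succ\pi_t\mid x)}]\ge e^{\eta/2}$, with equality only when $\mathbb{P}_c(\cdot\succ\pi_t\mid x)$ is constant on the support of $\pi_t$. Hence $\sum_y\pi_t(y\mid x)\exp\bigl(\eta(\mathbb{P}_c(y\succ\pi_t\mid x)-\tfrac{1}{2})\bigr)=Z_te^{-\eta/2}>1$ in general. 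So no normalized policy attains zero centered regression loss, and the minimizer of that loss over normalized policies is not the MWU iterate. For example, take two responses, $\pi_t$ uniform, and centered targets $\pm d$. The MWU iterate $p^*=e^{d}/(e^{d}+e^{-d})$ leaves the same residual $-\log\cosh d$ on both responses, and the derivative of the loss at $p^*$ is $-\log\cosh d\,\bigl(1/p^*-1/(1-p^*)\bigr)\neq 0$ for $d\neq 0$. Your notion of realizability therefore either can never hold (zero loss) or does not yield Hedge (argmin in the class). SPPO does not prove this absorption either. It places its realizability assumption on the loss whose target contains the exact $-\log Z_{\pi_t}(x)$, and introduces the $\eta/2$ replacement separately as a heuristic. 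That is why the paper only says the S-SPPO update \emph{approximates} MWU.

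The repair is to read the corollary's ``standard realizability'' as SPPO does: assume the exact update $\pi_{t+1}(\cdot\mid x)\propto\pi_t(\cdot\mid x)\exp\bigl(\eta\,\mathbb{P}_c(\cdot\succ\pi_t\mid x)\bigr)$ is attained in the policy class. Your regret argument then goes through verbatim. The rate then concerns these idealized calibrated iterates, not the minimizers of the centered $\tfrac{1}{\beta}(\hat{\mathbb{P}}_c-\tfrac{1}{2})$ loss actually optimized. Hoeffding's lemma gives $0\le\log Z_t-\eta/2\le\eta^2/8$, so the discrepancy is small at $\eta=\Theta(1/\sqrt{T})$. Exploiting that, however, would need an inexact-MWU perturbation argument that neither you nor the paper supplies.

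Two smaller points. First, $\mathrm{KL}(\pi\|\pi_1)\le\log|\mathcal{Y}|$ requires $\pi_1$ uniform; in general use $\log(1/\min_y\pi_1(y\mid x))$, which is still $T$-independent. Second, you cannot \emph{argue} that the target in Eq.~\eqref{eqn:calibration_formula} is the $\pi_t$-expectation of the pairwise kernel. It multiplies an averaged gate by an averaged preference, and in general $\mathbb{E}_{y'}[(1-S)(\hat{\mathbb{P}}-\tfrac{1}{2})]\neq(1-\mathbb{E}_{y'}S)(\mathbb{E}_{y'}\hat{\mathbb{P}}-\tfrac{1}{2})$. Taking the pairwise kernel as the definition, as the appendix proof of the lemma does, has to be stated as an assumption.
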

\begin{proof}
    See Appendix~\ref{app:proof2}.
\end{proof}

\begin{proposition}[$\epsilon$-Nash Equilibrium of the Regularized Game]
\label{prop:nash_bound}
Consider the regularized symmetric constant-sum game induced by S-SPPO, with the payoff function defined as:
\begin{equation}
    U_{\lambda}(\pi, \pi') = \mathbb{P}_{c}(\pi > \pi'|x) - \lambda \mathcal{R}_{rep}(\pi) + \lambda \mathcal{R}_{rep}(\pi')
\end{equation}
where $\mathbb{P}_{c}$ is the calibrated policy-level win rate, and $\lambda \mathcal{R}_{rep}$ denotes the geometric regularization term. Assuming the normalized latent embeddings bound the representation term such that $|\mathcal{R}_{rep}(\cdot)| \le M$, let $\pi_{\lambda}^{\ast}$ be the Nash Equilibrium of this regularized game. Then, $\pi_{\lambda}^{\ast}$ constitutes an $\epsilon$-Nash Equilibrium of the original calibrated game without latent repulsion, where the suboptimality gap is bounded by $\epsilon \le 2\lambda M$.
\end{proposition}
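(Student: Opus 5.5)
The plan is a standard perturbation argument comparing the two payoff functions pointwise. Write the original calibrated game's payoff as $U_0(\pi,\pi') = \PP_c(\pi \succ \pi' \mid \xb)$. By Lemma~\ref{lem:policy_constant_sum}, this defines a symmetric constant-sum game. The regularized payoff $U_\lambda$ adds $-\lambda\mathcal{R}_{rep}(\pi)+\lambda\mathcal{R}_{rep}(\pi')$, which is antisymmetric in $(\pi,\pi')$. Hence $U_\lambda(\pi,\pi')+U_\lambda(\pi',\pi) = 1$, so the regularized game is also symmetric constant-sum and its equilibrium $\pi_\lambda^\ast$ is well defined as a symmetric Nash point. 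Concretely, $\pi_\lambda^\ast$ satisfies
\[
U_\lambda(\pi,\pi_\lambda^\ast) \le U_\lambda(\pi_\lambda^\ast,\pi_\lambda^\ast) \le U_\lambda(\pi_\lambda^\ast,\pi') \quad \text{for all } \pi,\pi'.
\]
By antisymmetry of the perturbation, the middle term equals $\tfrac12$.

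The key step is a uniform bound on the difference between the two payoffs. For every pair,
\[
U_\lambda(\pi,\pi') - U_0(\pi,\pi') = \lambda\bigl(\mathcal{R}_{rep}(\pi')-\mathcal{R}_{rep}(\pi)\bigr).
\]
Since $|\mathcal{R}_{rep}|\le M$, this gives $|U_\lambda - U_0| \le 2\lambda M$ uniformly. The bound $M$ comes from the assumption in the statement; it is natural because $h_{\btheta}$ is normalized, so inner products lie in $[-1,1]$ and one may take $M=1$.

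Next, transfer the equilibrium inequalities. For the maximizing player, take any deviation $\pi$. Then
\[
U_0(\pi,\pi_\lambda^\ast) - U_0(\pi_\lambda^\ast,\pi_\lambda^\ast) = U_\lambda(\pi,\pi_\lambda^\ast) - U_\lambda(\pi_\lambda^\ast,\pi_\lambda^\ast) + \lambda\bigl(\mathcal{R}_{rep}(\pi)-\mathcal{R}_{rep}(\pi_\lambda^\ast)\bigr).
\]
The second and third terms above come from the perturbation evaluated at $(\pi,\pi_\lambda^\ast)$ and at $(\pi_\lambda^\ast,\pi_\lambda^\ast)$; the latter vanishes. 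The first difference is $\le 0$ by the equilibrium property, and the remaining term is at most $\lambda\cdot 2M$. The minimizing player is handled symmetrically, giving $U_0(\pi_\lambda^\ast,\pi') \ge U_0(\pi_\lambda^\ast,\pi_\lambda^\ast) - 2\lambda M$. Together these show that no unilateral deviation improves either player by more than $2\lambda M$, i.e.\ $\pi_\lambda^\ast$ is a $2\lambda M$-Nash equilibrium of the calibrated game.

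Note that a crude argument bounding each of the two evaluations of $U_\lambda - U_0$ by $2\lambda M$ separately would give $4\lambda M$. The stated constant requires the observation that the perturbation cancels on the diagonal $(\pi_\lambda^\ast,\pi_\lambda^\ast)$, leaving a single difference $\mathcal{R}_{rep}(\pi)-\mathcal{R}_{rep}(\pi_\lambda^\ast)$ bounded by $2M$.

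The main obstacle is definitional rather than technical. $\mathcal{R}_{rep}$ is defined through hidden states on pairs of responses, and it must be treated as a function of a single policy so that the payoff has the stated separable form. I would state this as the modeling convention implicit in the proposition, namely the expected latent inner product under $\pi$ itself. I would also assume existence of $\pi_\lambda^\ast$, for example via compactness of the policy class and continuity of $\mathcal{R}_{rep}$, rather than prove it.
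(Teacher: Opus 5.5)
Your proposal is correct and follows the paper's proof. Both arguments use the equilibrium condition $U_\lambda(\pi,\pi_\lambda^\ast)\le U_\lambda(\pi_\lambda^\ast,\pi_\lambda^\ast)=\tfrac12$ and substitute the payoff, so the regularizer contributes only $\lambda\bigl(\mathcal{R}_{rep}(\pi)-\mathcal{R}_{rep}(\pi_\lambda^\ast)\bigr)\le 2\lambda M$, giving $\mathbb{P}_c(\pi > \pi_\lambda^\ast\mid x)\le \tfrac12+2\lambda M$; you additionally write out the diagonal cancellation and the minimizing player's deviation, which the paper leaves implicit through symmetry.
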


\begin{proof}
See Appendix~\ref{app:proof3}.
\end{proof}

\section{Experiments}

\subsection{Experimental Setup} \label{sec:exp-setup}
\input{tables/AlpacaEval}
\noindent\textbf{Models and Datasets.}
Following~\citet{wu2024self}, we conduct our experiments using two base models: {Mistral-7B-Instruct-v0.2} \citep{jiang2023mistral} and {Llama-3-8B-Instruct}. For the dataset, we utilize {UltraFeedback} \citep{cui2023ultrafeedback} exclusively as a prompt source, extracting approximately 60k prompts while \textit{discarding the accompanying responses}. Adopting the iterative strategy from SPPO \citep{wu2024self}, we split these prompts into three portions, using one portion per iteration to prevent overfitting. Following \citep{wu2024self}, we employ {PairRM} \citep{jiang2023llm}, a 0.4B parameter pairwise preference model, as our preference oracle to annotate generated responses and estimate win rates.

\noindent\textbf{Baselines.} We compare our method against a comprehensive set of baselines covering both base models and state-of-the-art alignment algorithms following~\citet{wu2024self}. First, we evaluate the instruction-tuned Base Models ({Mistral-7B-Instruct-v0.2} and {Llama-3-8B-Instruct}) directly. Second, we benchmark against {SPPO} \citep{wu2024self}, which serves as our primary baseline. Third, for the sake of completeness, we implement {Iterative DPO} \citep{rafailov2023direct} and {Iterative IPO} \citep{azar2024general} by aligning their response generation and training pipelines exactly with SPPO, differing solely in the substitution of the optimization objective (i.e., replacing the SPPO loss with DPO or IPO loss). We also maintain the same model selection scheme (optimizing based on PairRM win rates) to ensure consistency. Additionally, we include {Snorkel (Mistral-PairRM-DPO)}(see Appendix~\ref{app:exp-details-weight}), a widely-used model trained via three rounds of iterative DPO using PairRM. Finally, we include {Self-Rewarding LM} \citep{yuan2024self} as a representative of self-play methods that utilize the LLM itself as a judge.

\noindent\textbf{Evaluation Benchmarks.} We assess performance across four widely recognized benchmarks to evaluate diverse capabilities. To measure instruction-following proficiency in realistic user interaction scenarios, we report win rates and length-controlled win rates on {AlpacaEval 2.0} \citep{dubois2024length}. We evaluate multi-turn reasoning and conversation quality on the rigorous {MT-Bench} \citep{zheng2024judging}, and the judge directly assigns the score. For the SPPO baseline, we directly report the results from \citet{wu2024self} where available. For benchmarks not covered in the original paper, we conduct evaluations using their official released checkpoints. Furthermore, for challenging open-ended queries, we utilize {Arena-Hard} \citep{li2024crowdsourced} (v0.1, consistent with SPPO setup), a benchmark distinguished by its high separability and strong correlation with human rankings in Chatbot Arena. We report with win rates.  


\noindent\textbf{Implementation Details.}
During the generation phase, we sample $K=5$ responses per prompt strictly following the setup of SPPO~\citep{wu2024self}. The winner and loser pairs for optimization are selected based on PairRM scores. Due to space constraints, detailed hyperparameters, probability estimation formulas, and extensive baseline configurations are provided in Appendix~\ref{app:exp-details}.

\input{tables/AlpacaEval_leaderboard}
\subsection{Experimental Results}
\label{sec:exp_results}

We evaluate \textit{S}-SPPO across three standard benchmarks. In general, we observe that post-training alignment methods yield improvements over instruction-tuned base models. However, \textit{S}-SPPO consistently achieves the most substantial gains among all compared methods, effectively addressing the stability issues inherent in standard self-play.

\noindent\textbf{Instruction Following.}
As detailed in Table~\ref{tab:alpaca_main} and Table~\ref{tab:leaderboard}, while most iterative alignment baselines improve upon the base models, \textit{S}-SPPO secures the largest performance leap. On Mistral-7B, for instance, our method effectively doubles the length-controlled (LC) win rate of the base model (from 14.72\% to \textbf{30.84\%}), outperforming the best result achieved by Snorkel (26.39\%) and other baselines. When compared directly to standard SPPO, our method demonstrates consistent superiority across all iterations. On Llama-3-8B, \textit{S}-SPPO reaches a final LC win rate of \textbf{40.95\%}, surpassing SPPO's 38.77\%. This advantage is further validated on the AlpacaEval 2.0 Leaderboard, where Llama-3-8B-S-SPPO (41.0\%) not only exceeds the SPPO baseline but also outperforms much larger proprietary models (e.g., Claude 3 Opus and GPT-4 0314). 
Most notably, when evaluated with the \textbf{GPT-4 Turbo} annotator, Llama-3-8B-S-SPPO achieves remarkable performance, reaching \textbf{52.19\%} and \textbf{47.46\%} in raw and LC win rates, respectively, setting a new standard for open-source models of this scale among self-play alignment methods without access to ground-truth references from the training dataset.

\noindent\textbf{Multi-Turn Conversation.}
In multi-turn evaluations (Table~\ref{tab:mtbench}), while some baselines struggle to maintain the conversational capabilities of the base model, \textit{S}-SPPO demonstrates the highest ceiling. Specifically, on Llama-3-8B, \textit{S}-SPPO achieves the highest average score of \textbf{8.22}. A critical comparison reveals the limitations of the uncalibrated baseline: standard SPPO exhibits a downward trend on Llama-3 ($8.01 \rightarrow 7.93$) and an initial regression on Mistral-7B (falling below the base model to 7.21). 
In contrast, \textit{S}-SPPO effectively mitigates this degradation, yielding {consistent performance gains} across iterations. This suggests that our dual-space calibration successfully stabilizes the self-play optimization, preventing the rapid collapse often observed in the uncalibrated baseline.

\input{tables/MTBench}

\noindent\textbf{Complex Open-Ended Queries.}
Finally, on the challenging Arena-Hard-Auto v0.1 benchmark (Table~\ref{tab:arena-hard}), we observe that while alignment generally improves performance over the base model (e.g., Mistral-7B improves from 12.6 to 23.9 with \textit{S}-SPPO), \textit{S}-SPPO again achieves the highest empirical results. Most notably, our method proves significantly more stable than standard SPPO. On Llama-3, standard SPPO suffers from a severe performance collapse, dropping from 31.0 to 29.8 across iterations. \textit{S}-SPPO successfully avoids this overfitting, maintaining stable learning dynamics and outperforming the standard SPPO baseline on Mistral-7B (23.9 vs. 23.3), thereby confirming the necessity of dual-space calibration for handling high-complexity prompts.

\input{tables/Arena}

\subsection{Analysis}

\label{sec:ablation}


\paragraph{Impact of Dual-Space Calibration.}
To validate the individual and synergistic contributions of our proposed Dual-Space Semantic Calibration, we conduct an ablation study on Llama-3-8B by selectively removing the Supervision Calibration and Representation Calibration modules. As summarized in Table~\ref{tab:aba1}, the full \textit{S}-SPPO framework achieves the highest performance with a peak length-controlled (LC) win rate of \textbf{47.46\%}. In contrast, the uncalibrated baseline (Row 1) lags significantly behind at 44.79\%. Introducing components individually reveals that both are indispensable: applying Supervision Calibration alone (Row 2) improves performance to 46.16\% by filtering noisy preference labels, while Representation Calibration alone (Row 3) yields 45.20\% by preserving latent diversity between chosen and rejected samples. These results confirm that \textit{S}-SPPO's superior performance stems from the joint effect of calibrating optimization targets and regularizing the feature space, rather than any single mechanism in isolation.

\input{tables/aba-1}

\noindent\textbf{Additional Analysis.} We further investigate the sensitivity of the encoder $\phi(\cdot)$ in Appendix~\ref{app:aba-4}, computational efficiency in Appendix~\ref{app:efficicency}, and the sensitivity of $\lambda$ in Appendix~\ref{app:aba-3}.

\vspace{-0.05in}
\section{Related Work}

Reinforcement Learning from Human Feedback~\citep[RLHF,][]{christiano2017deep,ouyang2022training} has emerged as a promising paradigm for aligning LLMs with human preference. Common RLHF approaches, such as Proximal Policy Optimization~\citep[PPO,][]{schulman2017proximal}, typically rely on an explicit reward model, which can be difficult to design and reliably validate in many settings. DPO~\citep[][]{rafailov2023direct} mitigates this challenge by using LLMs as implicit rewards. We, henceforth, direct our main focus on this family of methods.

\textbf{Preference Optimization Objectives.}
Many efforts have been devoted to different preference optimization objectives beyond DPO. SLic-HF~\citep{zhao2023slic} introduces a margin-calibrated Slic loss that enforces a winner–loser separation. IPO~\cite{azar2024general} proposes a KL-regularized preference-optimization objective that applies an identity transform to fit pairwise preferences directly, without relying on the standard Bradley–Terry formulation. KTO~\cite{ethayarajh2024kto} derives a preference-optimization loss from Kahneman–Tversky utility, increasing scores for “kept” responses and decreasing them for “rejected” ones. Odds Ratio Preference Optimization~\citep[ORPO,][]{hong2024orpo} enables supervised fine-tuning and preference alignment in a single training run, without requiring an intermediate reference policy. Simple Preference Optimization~\citep[SimPO,][]{meng2024simpo} introduces a reference-free objective that mitigates length bias by maximizing the gap between the preferred and dispreferred responses’ average log-likelihood. 

\textbf{Offline and Online Preference Optimization.}
Most prior DPO variants operate in an offline setting~\cite{rafailov2023direct, ethayarajh2024kto,hong2024orpo,meng2024simpo}, optimizing solely on a fixed and pre-collected preference dataset. However, without an explicit reward model, they cannot sample preference pairs from the optimal policy, which hinders their ability to capture the intransitivity and irrationality in human preferences. To address this, early work augments the preference dataset by sampling from a trained SFT policy~\cite{zhao2023slic} or from a further-refined SFT policy via rejection sampling~\cite{liu2024statistical}, enabling learning from data that more closely reflects the optimal policy’s behavior. 
Recently, many studies have extended this idea to an iterative, online setting, either by continually updating the reference model to the latest policy or by generating fresh preference pairs at each training iteration~\cite{xiong2024iterative,dong2024rlhf,rosset2024direct,kim2025sdpo,wu2024self}. 
Concurrently, various works have explored alternative game-theoretic formulations~\cite{wang2025magnetic,pasztor2025stackelberg,tang2025game} or autonomous self-improvement mechanisms~\cite{ji2024self,chen2024self,le2025token,xiang2025self} to enhance stability.
In this paper, we focus specifically on Self-Play Preference Optimization~\citep[SPPO,][]{wu2024self} because it is one of the most fundamental works for addressing intransitivity and irrationality by directly optimizing preference probabilities within a constant-sum game framework, rather than relying on reward maximization. Moreover, it generates on-policy self-play training signals using a lightweight preference model, without requiring human-annotated preference data.

\vspace{-0.05in}
\section{Conclusion}
In this work, we identify a critical instability in self-play preference optimization: policy degeneration occurs when preference oracles assign overconfident wins to semantically indistinguishable responses. To address this, we propose \textit{S}-SPPO, a dual-space calibration framework that stabilizes alignment via Supervision Calibration (semantic gating) and Representation Calibration (latent repulsion). Theoretically, we prove that \textit{S}-SPPO preserves the underlying constant-sum game structure, theoretically supporting convergence to a Nash Equilibrium. Empirically, \textit{S}-SPPO sets a new standard for open-source models, achieving a 52.19\% win rate and 47.46\% length-controlled win rate on AlpacaEval 2.0 without relying on human-annotated data. We provide further discussion on limitations and future directions in Appendix~\ref{app:discuss}.



\section*{Impact Statement}

This paper addresses the instability in Large Language Model alignment. We see three main potential impacts of our work:

\textbf{Improved Reliability.} Standard SPPO paradigm often forces models to fit noise in the reward model. By neutralizing supervision signals for semantically indistinguishable pairs, our method discourages the policy from overfitting to spurious preferences. This leads to models that are less likely to hallucinate differences in ambiguous situations.

\textbf{Preserved Diversity.} Alignment training frequently causes \textit{mode collapse}, where models become repetitive. Our method forces the model to maintain distinct internal representations. This ensures the model retains its variety and does not degenerate into a narrow set of outputs.

\textbf{Reduced Reliance on Human Data.} Self-play optimization generates training signals automatically using a reward model. This removes the need for expensive human-annotated preference labels. By making this process stable and effective, our framework allows models to improve autonomously without the high cost of manual data collection.

\bibliography{example_paper}
\bibliographystyle{icml2026}

\newpage
\appendix

\input{appendix}

\end{document}

%% file: tables/AlpacaEval.tex
\begin{table*}[!t]
    \centering
    \caption{\textbf{AlpacaEval 2.0 evaluation against GPT-4 Turbo.} We report results using both the GPT-4 Annotator and the GPT-4 Turbo Annotator (see Appendix~\ref{app:exp-details-benchmark}) as judges. For the SPPO baseline, we directly report the results from \citet{wu2024self} where available (i.e., GPT-4 Annotator); otherwise, we evaluate using the official released checkpoints (i.e., GPT-4 Turbo Annotator, see Appendix~\ref{app:exp-details-weight}). Across both evaluation protocols, \textit{S}-SPPO consistently outperforms baselines in both raw and length-controlled win rates.}
    \resizebox{0.8\textwidth}{!}{%
    \begin{tabular}{l | c c c | c c c }
    \toprule
    \multirow{2}{*}{Model} & \multicolumn{3}{c|}{GPT-4 Annotator} & \multicolumn{3}{c}{GPT-4 Turbo Annotator} \\
     & LC Win & Win Rate & Avg. Len & LC Win & Win Rate & Avg. Len \\
    \midrule
    Mistral-7B-Instruct-v0.2  & 17.11 & 14.72 & 1676 & 18.84 & 15.38 & 1622 \\
    \midrule
    Snorkel (Mistral-PairRM-DPO) & 26.39 & 30.22 & 2736 & - & - & - \\
    \midrule 
    Self-Rewarding 70B Iter1 & - & 9.94 & 1092 & - & - & - \\
    Self-Rewarding 70B Iter2 & - & 15.38 & 1552 & - & - & - \\
    Self-Rewarding 70B Iter3 & - & 20.44 & 2552 & - & - & - \\
    \midrule
    \midrule
    Mistral-7B-DPO Iter1   & 23.81 & 20.44 & 1723 & - & - & - \\
    Mistral-7B-DPO Iter2   & 24.23 & 24.46 & 2028 & - & - & - \\
    Mistral-7B-DPO Iter3  & 22.30 & 23.39 & 2189 & - & - & - \\
    \midrule
    Mistral-7B-IPO Iter1   & 23.78 & 20.77 & 1693 & - & - & - \\
    Mistral-7B-IPO Iter2   & 21.08 & 23.38 & 2660 & - & - & - \\
    Mistral-7B-IPO Iter3  & 20.06 & 22.47 & 2760 & - & - & - \\
    \midrule
    Mistral-7B-SPPO Iter1  & $24.79$ & $23.51$ & 1855 & 28.80 & 29.51 & 2028 \\
    Mistral-7B-SPPO Iter2  & $26.89$ & $27.62$ & 2019 & 32.60 & 35.31 & 2172 \\
    Mistral-7B-SPPO Iter3  & $28.53$ & $31.02$ & 2163 & 34.90 & 39.46 & 2294 \\
    \midrule
    \rowcolor[HTML]{EFEFEF}Mistral-7B-\textit{S}-SPPO Iter1  & 24.28 & 25.02 & 2039 & 28.34 & 29.18 & 2039 \\
    \rowcolor[HTML]{EFEFEF}Mistral-7B-\textit{S}-SPPO Iter2  & 28.22 & 31.42 & 2206 & 34.89 & 38.04 & 2206 \\
    \rowcolor[HTML]{EFEFEF}Mistral-7B-\textit{S}-SPPO Iter3  & 30.84 & 35.09 & 2326 & 36.13 & 40.26 & 2326 \\
    \midrule
    \midrule
    Llama-3-8B-Instruct  & 22.92 & 22.57 &  1899 & 28.31 & 28.52 & 1975 \\
    \midrule
    Llama-3-8B-SPPO Iter1  & $31.73$ & $31.74$ &  1962 & 35.69 & 36.29 & 2010 \\
    Llama-3-8B-SPPO Iter2  & $35.15$ & $35.98$ &  2021 & 42.24 & 43.42 & 2070 \\
    Llama-3-8B-SPPO Iter3  & $38.77$ & $39.85$ &   2066 & 44.79 & 46.84 & 2128 \\
    \midrule
    \rowcolor[HTML]{EFEFEF}Llama-3-8B-\textit{S}-SPPO Iter1 & 31.33 & 34.72 & 2114 & 38.18 & 40.37 & 2114 \\
\rowcolor[HTML]{EFEFEF}Llama-3-8B-\textit{S}-SPPO Iter2 & 37.07 & 42.07 & 2208 & 44.77 & 48.47 & 2208 \\
\rowcolor[HTML]{EFEFEF}Llama-3-8B-\textit{S}-SPPO Iter3 &\textbf{40.95} & \textbf{46.52} & 2287 & \textbf{47.46} & \textbf{52.19} & 2287 \\
    \bottomrule
    \end{tabular}
    }
    \label{tab:alpaca_main}
\end{table*}

%% file: tables/AlpacaEval_leaderboard.tex
\begin{table}[t!]
    \centering
    \caption{\textbf{Comparisons on the AlpacaEval 2.0 Leaderboard.} We compare our method against proprietary models (e.g., GPT-4 Turbo, Claude 3 Opus) and other top-tier open-source models.}
    \resizebox{0.9\columnwidth}{!}{%
\begin{tabular}{l | c c  }
\toprule
\multirow{2}{*}{Model} & \multicolumn{2}{c}{AlpacaEval 2.0}   \\
& LC Win Rate  & Win Rate  \\
\midrule
GPT-4 Turbo & 50.0 & 50.0 \\
\rowcolor[HTML]{EFEFEF} Llama-3-8B-\textit{S}-SPPO Iter3 & 41.0 & 46.5 \\
 Llama-3-8B-SPPO Iter3 & 38.8 & 39.9 \\
Claude 3 Opus & 40.5 & 29.1  \\
GPT-4 0314  & 35.3 & 22.1  \\
Llama 3 70B Instruct & 34.4 & 33.2 \\
\rowcolor[HTML]{EFEFEF} Mistral-7B-\textit{S}-SPPO Iter3  & 30.8 & 35.1    \\
 Mistral-7B-SPPO Iter3  & 28.5 & 31.0    \\
GPT-4 0613 & 30.2 & 15.8   \\
Mistral Medium  & 28.6 & 21.9   \\
Claude 2 & 28.2 & 17.2 \\
Snorkel (Mistral-PairRM-DPO) & 26.4 & 30.2  \\
Gemini Pro & 24.4 & 18.2 \\
Mistral 8$\times$7B v0.1 & 23.7 & 18.1 \\
Llama 3 8B Instruct & 22.9 & 22.6 \\
\bottomrule
\end{tabular}
    }
    \label{tab:leaderboard}
\end{table}

%% file: tables/MTBench.tex
\begin{table}[t!]
    \centering
    \caption{\textbf{MT-Bench evaluation for multi-turn conversation capabilities.} We focus on the average score. Unlike SPPO, which shows a downward trend across iterations on Llama-3 (8.01 $\to$ 7.93), \textit{S}-SPPO demonstrates stable or increasing performance trends. Notably, \textit{S}-SPPO achieves the highest average score, surpassing even the best-performing iteration of all baselines.}
    \label{tab:mt-bench}
    \resizebox{0.90\columnwidth}{!}{%
    \begin{tabular}{l | c c | c}
    \toprule
    \multirow{2}{*}{Model} & \multicolumn{3}{c}{MT-Bench}   \\
    & 1st Turn & 2nd Turn & Average \\
    \midrule
    Mistral-7B-Instruct-v0.2  & 7.78 & 7.25& 7.51  \\
    Snorkel (Mistral-PairRM-DPO) & 7.83 & 7.33 & 7.58  \\
    \midrule
    Mistral-7B-DPO Iter1  & 7.45 & 6.58 & 7.02 \\
    Mistral-7B-DPO Iter2  & 7.57 & 6.56 & 7.06 \\
    Mistral-7B-DPO Iter3  & 7.49 & 6.69 & 7.09 \\
    \midrule
    Mistral-7B-SPPO Iter1  & 7.63 & 6.79 & 7.21   \\
    Mistral-7B-SPPO Iter2  & 7.90 & 7.08 & 7.49   \\
    Mistral-7B-SPPO Iter3  & 7.84 & 7.34 & 7.59  \\
    \midrule
    \rowcolor[HTML]{EFEFEF}Mistral-7B-\textit{S}-SPPO Iter1  & 7.84 & 7.26 & 7.55   \\
    \rowcolor[HTML]{EFEFEF}Mistral-7B-\textit{S}-SPPO Iter2  & 8.04 & 7.05 & 7.55   \\
    \rowcolor[HTML]{EFEFEF}Mistral-7B-\textit{S}-SPPO Iter3  & 7.96 & 7.36 & 7.66  \\
    \midrule
    Llama-3-8B-SPPO Iter1  & 8.29 & 7.73 & 8.01   \\
    Llama-3-8B-SPPO Iter2  & 8.33 & 7.61 & 7.97   \\
    Llama-3-8B-SPPO Iter3  & 8.36 & 7.49 & 7.93  \\
    \midrule
    \rowcolor[HTML]{EFEFEF}Llama-3-8B-\textit{S}-SPPO Iter1  & 8.20 & 7.85 & 8.03   \\
    \rowcolor[HTML]{EFEFEF}Llama-3-8B-\textit{S}-SPPO Iter2  & 8.60 & 7.82 & 8.21   \\
    \rowcolor[HTML]{EFEFEF}Llama-3-8B-\textit{S}-SPPO Iter3  & 8.54 & 7.90 & \textbf{8.22} \\
    \bottomrule
    \end{tabular}
    }\label{tab:mtbench}
\end{table}

%% file: tables/Arena.tex
\begin{table}[t!]
    \centering
    \caption{\textbf{Arena-Hard-Auto evaluation.} We observe that standard SPPO suffers from performance degradation on Llama-3 across iterations. In contrast, \textit{S}-SPPO successfully mitigates this collapse, demonstrating consistent performance improvements.}
    \label{tab:arena-hard}
    \resizebox{0.8\columnwidth}{!}{%
    \begin{tabular}{l | c}
    \toprule
    Model & Arena-Hard-Auto   \\
    \midrule
    Mistral-7B-Instruct  & 12.6  \\
    \midrule
    Snorkel (Mistral-PairRM-DPO) & 20.7  \\
    \midrule
    Mistral-7B-SPPO Iter1  & 18.7   \\
    Mistral-7B-SPPO Iter2  & 20.4   \\
    Mistral-7B-SPPO Iter3  & 23.3  \\
    \midrule
    \rowcolor[HTML]{EFEFEF}Mistral-7B-\textit{S}-SPPO Iter1  & 21.5   \\
    \rowcolor[HTML]{EFEFEF}Mistral-7B-\textit{S}-SPPO Iter2  & 21.8 \\
    \rowcolor[HTML]{EFEFEF}Mistral-7B-\textit{S}-SPPO Iter3  & 23.9  \\
    \midrule
    Llama-3-8B-SPPO Iter1  & 31.0   \\
    Llama-3-8B-SPPO Iter2  & 30.1   \\
    Llama-3-8B-SPPO Iter3   & 29.8  \\
    \midrule
   \rowcolor[HTML]{EFEFEF} Llama-3-8B-\textit{S}-SPPO Iter1  & 30.0   \\
   \rowcolor[HTML]{EFEFEF} Llama-3-8B-\textit{S}-SPPO Iter2  & 30.6   \\
   \rowcolor[HTML]{EFEFEF} Llama-3-8B-\textit{S}-SPPO Iter3  & 31.5  \\
    \bottomrule
    \end{tabular}
    }
\end{table}

%% file: tables/aba-1.tex

\begin{table}[!t]
\centering
\caption{Ablation study on Dual-Space Semantic Calibration. We analyze the impact of removing Supervision Calibration and Representation Calibration. The results demonstrate that both components are essential, with the full S-SPPO framework achieving the highest win rates.}
\label{tab:aba1}
\resizebox{0.40\textwidth}{!}{%
\begin{tabular}{cc|ccc}\toprule
\multirow{2}{*}{\begin{tabular}[c]{@{}c@{}}Supervision \\ Calibration\end{tabular}} & \multirow{2}{*}{\begin{tabular}[c]{@{}c@{}}Representation \\ Calibration\end{tabular}} & \multicolumn{3}{c}{LC Win Rate}   \\ 
                                                                                    &                                                                                        & Iter. 1              & Iter. 2              & Iter. 3              \\ \midrule
\ding{55}                                                                                   & \ding{55}                                                                                      & 35.69                & 42.24                & 44.79                \\
\ding{51}                                                                                   & \ding{55}                                                                                      & 38.39 & 43.91 & 46.16 \\
\ding{55}                                                                                   & \ding{51}                                                                                       & 37.01 & 43.79 & 45.20 \\
\ding{51}                                                                                   & \ding{51}                                                                                       & 38.18                & 44.77                & 47.46          \\ \bottomrule     
\end{tabular}%
}
\vspace{-0.05in}
\end{table}
\vspace{-0.07in}

%% file: appendix.tex
\onecolumn


\section{Proof}\label{app:proof}

\subsection{Proof of Lemma~\ref{lem:policy_constant_sum}}\label{app:proof1}

\begin{proof}
By definition, the policy-level win rate is the expectation of the calibrated pairwise preference. Thus, it suffices to show that the calibrated pairwise probability $\hat{\PP}_c$ maintains the antisymmetry property of the original preference oracle.

For any response pair $(\yb_i, \yb_j)$, let $S_{ij} = S_{ji}$ denote their semantic similarity. The calibrated pairwise probability is given by:
\begin{equation}
\label{eqn:pairwise_proof}
\hat{\PP}_c(\yb_i \succ \yb_j) = \frac{1}{2} + (1 - S_{ij}) \left( \hat{\PP}(\yb_i \succ \yb_j) - \frac{1}{2} \right).
\end{equation}
We examine the reverse pair $(\yb_j, \yb_i)$. Since the original preference oracle is a valid probability distribution, it satisfies $\hat{\PP}(\yb_j \succ \yb_i) = 1 - \hat{\PP}(\yb_i \succ \yb_j)$. Substituting this into the calibration formula:
\begin{align*}
\hat{\PP}_c(\yb_j \succ \yb_i) &= \frac{1}{2} + (1 - S_{ji}) \left( \hat{\PP}(\yb_j \succ \yb_i) - \frac{1}{2} \right) \\
&= \frac{1}{2} + (1 - S_{ij}) \left( \left(1 - \hat{\PP}(\yb_i \succ \yb_j)\right) - \frac{1}{2} \right) \\
&= \frac{1}{2} + (1 - S_{ij}) \left( \frac{1}{2} - \hat{\PP}(\yb_i \succ \yb_j) \right) \\
&= \frac{1}{2} - (1 - S_{ij}) \left( \hat{\PP}(\yb_i \succ \yb_j) - \frac{1}{2} \right) \\
&= 1 - \left[ \frac{1}{2} + (1 - S_{ij}) \left( \hat{\PP}(\yb_i \succ \yb_j) - \frac{1}{2} \right) \right] \\
&= 1 - \hat{\PP}_c(\yb_i \succ \yb_j).
\end{align*}
Since $\hat{\PP}_c(\yb_i \succ \yb_j) + \hat{\PP}_c(\yb_j \succ \yb_i) = 1$ holds for any realization $(\yb_i, \yb_j)$, taking the expectation over $\yb \sim \pi(\cdot|\xb)$ and $\yb' \sim \pi'(\cdot|\xb)$ preserves this equality by linearity. Thus, the policy-level game remains constant-sum.
\end{proof}

\subsection{Proof of Corollary~\ref{cor:convergence}}\label{app:proof2}
\begin{proof}
Lemma~\ref{lem:policy_constant_sum} establishes that the semantic calibration preserves the symmetric constant-sum structure. Accordingly, the \textit{S}-SPPO update can be interpreted as approximating multiplicative weight updates (MWU) on this calibrated game. Standard no-regret learning results for constant-sum games \citep{freund1999adaptive,wu2024self} apply directly, facilitating convergence to the Nash Equilibrium of the calibrated game defined by $\PP_c$.
\end{proof}

\subsection{Proof of Proposition~\ref{prop:nash_bound}}\label{app:proof3}
\begin{proof}
The asymmetric construction of the regularization terms ensures that $U_{\lambda}(\pi, \pi') + U_{\lambda}(\pi', \pi) = 1$, preserving the strictly constant-sum property. By the definition of the Nash Equilibrium for $U_{\lambda}$, it holds that $U_{\lambda}(\pi, \pi_{\lambda}^{\ast}) \le 1/2$ for any policy $\pi$. Substituting the payoff definition yields:
\begin{equation}
    \mathbb{P}_{c}(\pi > \pi_{\lambda}^{\ast}|x) - \lambda \mathcal{R}_{rep}(\pi) + \lambda \mathcal{R}_{rep}(\pi_{\lambda}^{\ast}) \le \frac{1}{2}
\end{equation}
Rearranging the terms, we obtain:
\begin{equation}
    \mathbb{P}_{c}(\pi > \pi_{\lambda}^{\ast}|x) \le \frac{1}{2} + \lambda \big( \mathcal{R}_{rep}(\pi) - \mathcal{R}_{rep}(\pi_{\lambda}^{\ast}) \big)
\end{equation}
Given the boundedness assumption $|\mathcal{R}_{rep}(\cdot)| \le M$, the maximum deviation from the original equilibrium ($1/2$) is strictly bounded by:
\begin{equation}
    \mathbb{P}_{c}(\pi > \pi_{\lambda}^{\ast}|x) \le \frac{1}{2} + \lambda \big( M - (-M) \big) = \frac{1}{2} + 2\lambda M
\end{equation}
This concludes the proof, demonstrating that the theoretical deviation from the unregularized Nash Equilibrium is linearly bounded by the regularization weight $\lambda$.
\end{proof}

\begin{algorithm}[t!]
\caption{Algorithmic Summary of proposed \textit{S}-SPPO.}
\label{alg:s_sppo}
\begin{algorithmic}[1]
    \STATE \textbf{input}:
    base policy $\pi_{\btheta_1}$,
    preference oracle $\PP$,
    semantic encoder $\phi(\cdot)$,
    scaling factor $\frac{1}{\beta}$, 
    regularization weight $\lambda$,
    number of generated samples $K$.
    
    \FOR{$t=1,2,\dots$}
        \STATE Generate responses $\yb_{1:K} \sim \pi_{t}(\cdot|\xb)$ for prompt $\xb \sim \cX$. \label{line:generate}
        
        \STATE Annotate win rates $\hat{P}(\yb \succ \pi_t | \xb)$ for all $\yb \in \yb_{1:K}$. \label{line:annotate}
        
        \STATE \textbf{Selection}: Identify winner $\yb_w$ and loser $\yb_l$ based on scores. \label{line:select}
        
        \STATE \myhighlight{\textbf{Supervision Calibration}: 
        Compute similarity $S = \text{sim}(\phi(\yb_w), \phi(\yb_l))$ and calibrate targets for both $y \in \{\yb_w, \yb_l\}$:
        \begin{equation*}
            \hat{P}_c(\yb) \leftarrow \frac{1}{2} + \left(1 - S\right)\left(\hat{P}(\yb) - \frac{1}{2}\right)
        \end{equation*}
        \textit{(Symmetrically pulls both winner and loser targets towards 0.5)}} \label{line:calibrate}
        
        \STATE Form dataset $\cD_t = \{(\xb, \yb, \hat{P}_c(\yb)) \mid \yb \in \{\yb_w, \yb_l\} \}$.
        
        \STATE Optimize $\pi_{\btheta_{t+1}}$ with \textbf{Representation Calibration}: \label{line:optimize}
        
        \myhighlight{
        \vspace{-0.4cm} 
        \begin{align}
            \btheta_{t+1}
            &\leftarrow 
            \argmin_{\btheta}
            \EE_{(\xb, \yb, \hat{P}_c) \sim \cD_t}
            \Bigg[
            \underbrace{\bigg(
            \log \frac{\pi_{\btheta}(\yb|\xb)}{\pi_{t}(\yb|\xb)}
            -
            \frac{1}{\beta} \bigg(\hat{P}_c - \frac{1}{2}
            \bigg)
            \bigg)^2}_{\text{Regression on Calibrated Targets}} \nonumber+ \lambda \cdot h_{\btheta}(\yb_w)^\top h_{\btheta}(\yb_l) \Bigg]
            \label{eqn:opt_sppo}
        \end{align}
        }
    \ENDFOR
\end{algorithmic}
\end{algorithm}

\section{Detailed Experimental Settings} \label{app:exp-details}

\subsection{Training Hyperparameters}
All experiments are conducted on 8 $\times$ Nvidia H100 GPUs. For our proposed method ({S}-SPPO), we train for a total of three iterations. In each iteration, we select the checkpoint with the highest average win rate (judged by PairRM-0.4B on a hold-out subset of UltraFeedback) to proceed to the next round.
We trained the model for 18 epochs using a global batch size of 64 and we set $\beta = 1\text{e}-3$, with a learning rate of $5.0\text{e}-7$ that follows a linear decay schedule with a warmup ratio of 0.1.
These settings were selected to ensure stability and fair comparison across all iterative baselines ({S}-SPPO, Iterative DPO, Iterative IPO). $\lambda$ is set to 1. We choose \texttt{all-MiniLM-L6-v2} as our default lightweight encoder model.

\subsection{Response Generation and Probability Estimation}
During the generation phase of each iteration, we sample $K=5$ responses for each prompt from the current policy using a temperature of $1.0$ and top-$p=1.0$.
To estimate the winning probability distribution $\hat{P}(\yb_i \succ \pi_t |\xb)$, we calculate the average win rate of a response $\yb_i$ against all other sampled responses $\yb_k$:
\begin{align}
    \hat{P}(\yb_i \succ \pi_t |\xb) = \frac{1}{K} \sum_{k=1}^{K} \PP(\yb_i \succ \yb_k | \xb), \quad \forall i \in [K].
\end{align}
The pairwise probability $\PP(\yb \succ \yb' | \xb)$ is derived from the PairRM scoring function $s(\yb, \yb' ; \xb)$, which reflects the relative strength difference:
\begin{align}
    \PP(\yb \succ \yb' | \xb) = \frac{\exp(s(\yb, \yb' ; \xb))}{1 + \exp(s(\yb, \yb' ; \xb))}.
\end{align}
Unlike Bradley-Terry models which assume transitivity, PairRM assigns relative rewards directly, allowing it to model general, potentially non-transitive preferences.
For preference pair construction, following Snorkel (see next section for the link), we select the response with the highest PairRM score as the winner $\yb_w$ and the lowest as the loser $\yb_l$, where the score is defined as:
\begin{align}
    s_{\text{PairRM}}(\yb_i ; \xb) := \frac{1}{K} \sum_{k=1}^{K} s(\yb_i, \yb_k; \xb).
\end{align}

\subsection{Baseline Model and Encoder Details}\label{app:exp-details-weight}
\begin{itemize}[leftmargin=*]
\item{Mistral-7B-Instruct-v0.2}: {\url{https://huggingface.co/mistralai/Mistral-7B-Instruct-v0.2}}.
\item  Llama-3-8B-Instruct: {\url{https://huggingface.co/meta-llama/Meta-Llama-3-8B-Instruct}}.

\item  Snorkel (Mistral-PairRM-DPO): \url{https://huggingface.co/snorkelai/Snorkel-Mistral-PairRM-DPO}.

\item PairRM: \url{https://huggingface.co/llm-blender/PairRM}.

\item SPPO official checkpoints: \url{https://huggingface.co/collections/UCLA-AGI/sppo}.

\item all-MiniLM-L6-v2: \url{https://huggingface.co/sentence-transformers/all-MiniLM-L6-v2}. It has a model size of 22.7M.

\item jina-embeddings-v3~\cite{sturua2024jina}: \url{https://huggingface.co/jinaai/jina-embeddings-v3}. It has a model size of 0.6B.

\item nomic-ai/nomic-embed-text-v1.5~\cite{nussbaum2024nomic}: \url{https://huggingface.co/nomic-ai/nomic-embed-text-v1.5}. It has a model size of 0.1B.

\end{itemize}
\subsection{Benchmark Details}\label{app:exp-details-benchmark}
\begin{itemize}[leftmargin=*]
    \item \textbf{AlpacaEval 2.0:} Uses AlpacaFarm prompts \citep{dubois2024alpacafarm}. Responses are judged against a GPT-4-Turbo baseline using different annotators, i.e, judges. We report the length-controlled win rate. We provided their link here:
    \begin{itemize}
        \item GPT-4 annotator: \url{https://github.com/tatsu-lab/alpaca_eval/blob/main/src/alpaca_eval/evaluators_configs/weighted_alpaca_eval_gpt4_turbo/configs.yaml}.
        \item GPT-4 Turbo annotator: \url{https://github.com/tatsu-lab/alpaca_eval/blob/main/src/alpaca_eval/evaluators_configs/weighted_alpaca_eval_gpt4_turbo_new/configs.yaml}.
    \end{itemize}
    \item \textbf{Arena-Hard:} A dataset designed to have high separability and correlation with Chatbot Arena Elo ratings. It focuses on complex, open-ended queries \citep{li2024crowdsourced}.
    \item \textbf{MT-Bench:} Consists of 80 high-quality multi-turn questions covering coding, reasoning, and role-playing. Responses are graded on a scale of 1-10 by GPT-4 \citep{zheng2024judging}.
\end{itemize}

\section{More Analysis}

\subsection{Sensitivity of Encoder $\phi(\cdot)$}\label{app:aba-4}


In Fig.~\ref{fig:overview}, we use the encoder \texttt{all-MiniLM-L6-v2} to reflect the uncalibrated nature of the original SPPO paper~\cite{wu2024self}. Here, we demonstrate that this observation is consistent across different encoder architectures. To illustrate this, we plot the features extracted by representative encoders in Fig.~\ref{fig:more_encoders}, which reveals the same pattern of misalignment: the preference oracle assigns high confidence even when semantic similarity is high. Furthermore, as shown in Table~\ref{tab:encoder_ablation}, we observe that while the choice of encoder leads to minor fluctuations in final performance, our proposed \textit{S}-SPPO consistently outperforms the baseline across all iterations.

\begin{figure}[h]
    \centering
    \includegraphics[width=0.75\linewidth]{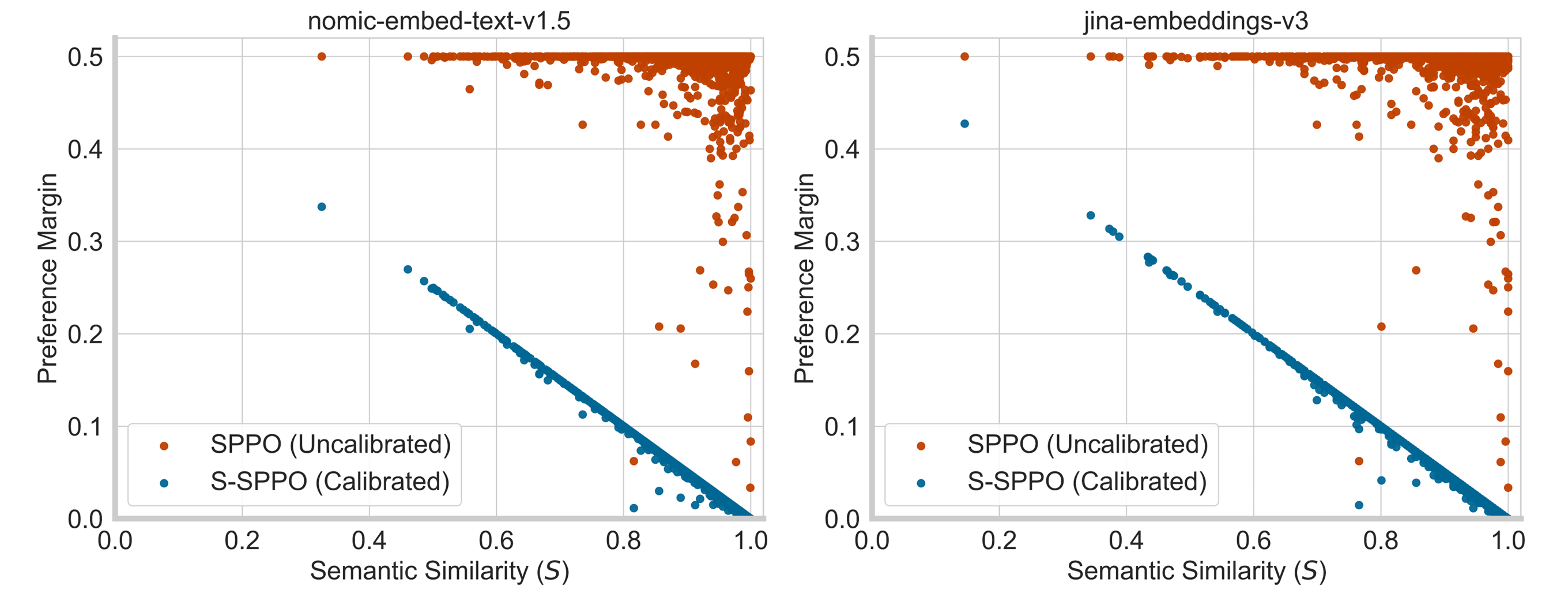}
    \caption{Uncalibration across different encoders. We visualize the preference margin against semantic similarity for \texttt{nomic-embed-text-v1.5} and \texttt{jina-embeddings-v3}. Similar to the observations with \texttt{all-MiniLM-L6-v2}, the uncalibrated baseline (SPPO) fails to converge to a neutral preference margin even for semantically indistinguishable pairs ($S \approx 1.0$), whereas \textit{S}-SPPO successfully calibrates the margin.}
    \label{fig:more_encoders}
\end{figure}

\begin{table}[h]
\centering
\caption{Ablation study on encoder selection. We report the Length-Controlled (LC) win rates on AlpacaEval 2.0. \textit{S}-SPPO achieves consistent gains over the baseline regardless of the specific semantic encoder used for calibration.}
\label{tab:encoder_ablation}
\resizebox{0.4\columnwidth}{!}{%
\begin{tabular}{lrrr}\toprule
                      & \multicolumn{1}{l}{Iter. 1} & \multicolumn{1}{l}{Iter. 2} & \multicolumn{1}{l}{Iter. 3} \\ \midrule
Baseline              & 35.7                        & 42.2                        & 44.8                        \\ \midrule
jina-embeddings-v3    & 36.8                        & 44.1                        & 47.7                        \\
nomic-embed-text-v1.5 & 39.0                          & 43.9                        & 46.9                        \\
all-MiniLM-L6-v2      & 38.2                        & 44.8                        & 47.5                        \\ \bottomrule    
\end{tabular}%
}
\end{table}

\subsection{Computational Efficiency Analysis}\label{app:efficicency}
We assess the computational overhead of \textit{S}-SPPO compared to the SPPO baseline. 
The implementation pipeline follows the procedure outlined in Algorithm~\ref{alg:s_sppo}, comprising three main phases: \textit{Generation \& Annotation} (Steps 3--4), \textit{Preference Construction} (Steps 5--7), and \textit{Policy Optimization} (Step 8).
As shown in Table~\ref{tab:efficiency}, the overhead introduced by our dual-space calibration is minor.
During the \textit{Preference Construction} phase, applying Supervision Calibration with a lightweight encoder adds only $\sim$5 seconds to the runtime ($90\text{s} \to 95\text{s}$) with a minimal GPU memory footprint of $\sim$1.7 GB.
In the \textit{Policy Optimization} phase, Representation Calibration requires accessing the model's hidden states, resulting in a \textit{slight increase} in peak GPU memory ($67.0\text{GB} \to 72.8\text{GB}$) due to computation graph storage, but virtually no increase in training time ($\sim$16 mins).
Crucially, the \textit{Total Pipeline} runtime remains approximately 66 minutes, as it is dominated by the generation and annotation phases (Steps 3--4) which are shared between methods. This confirms that \textit{S}-SPPO achieves significant performance gains with minimal impact on training efficiency.

\input{tables/aba-2}



\subsection{Sensitivity of Hyperparameter $\lambda$}\label{app:aba-3}

\begin{figure}[h]
    \centering
    \includegraphics[width=0.5\linewidth]{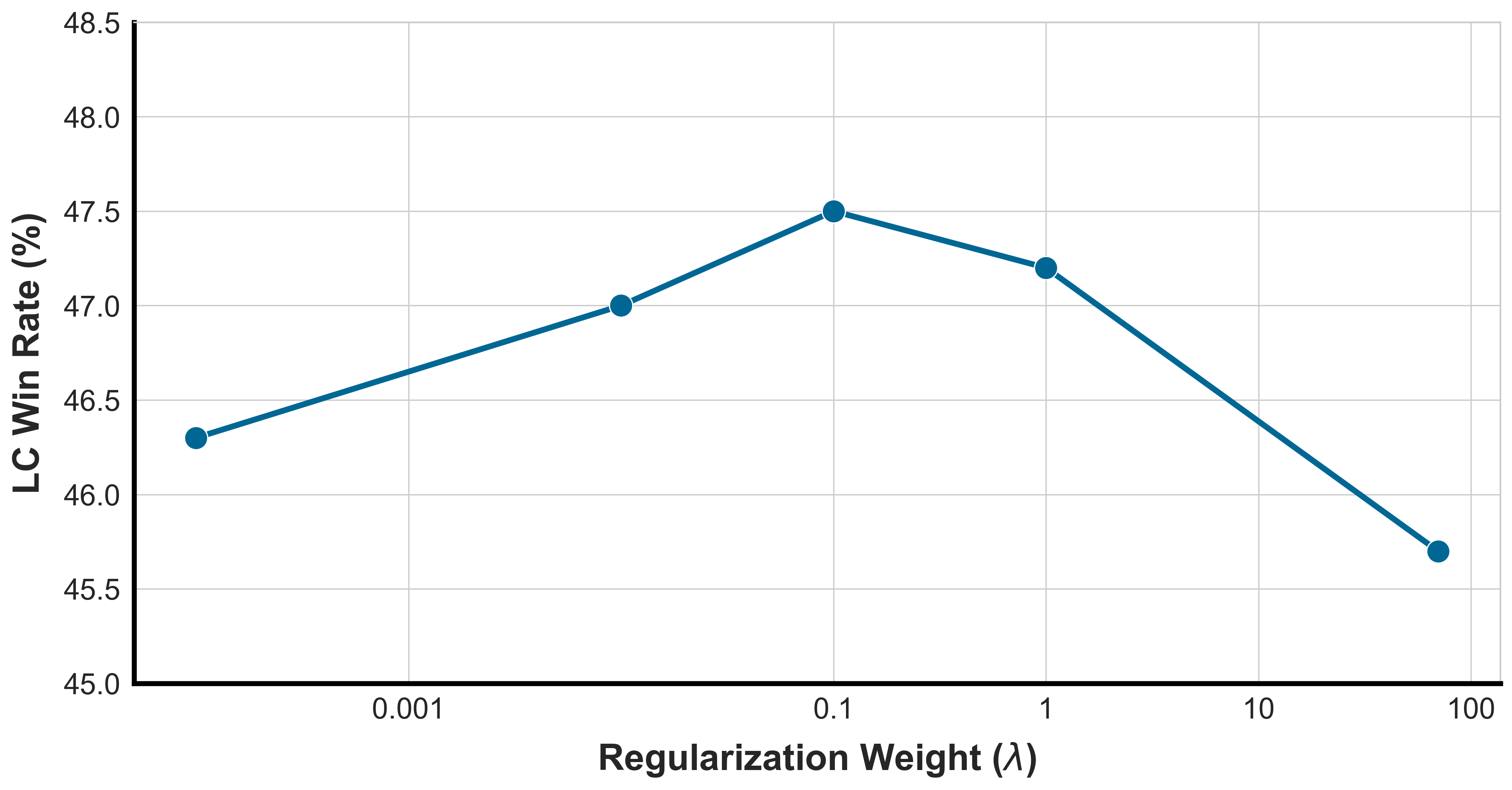}
    \caption{Sensitivity analysis of $\lambda$ on AlpacaEval 2.0.}
    \label{fig:lambda_sensitivity}
\end{figure}

In this section, we analyze the sensitivity of \textit{S}-SPPO to the regularization weight $\lambda$, which controls the strength of Representation Calibration. As illustrated in Fig.~\ref{fig:lambda_sensitivity}, we observe an inverted-U trend where performance peaks when $\lambda$ is in the moderate range (around $0.1$ to $1$). When $\lambda$ is too small, the repulsive force is insufficient, limiting the model's discriminative power. Conversely, performance degrades as $\lambda$ becomes excessively large. We attribute this to the disruption of the natural semantic manifold: since both chosen and rejected responses are generated by a capable policy, they naturally share semantic proximity. Forcing these responses to be strictly orthogonal via a large $\lambda$ imposes an unnatural geometric constraint that hinders learning. Consequently, a moderate $\lambda$ strikes the optimal balance, enforcing sufficient latent diversity without shattering the semantic structure of the policy.

\section{Discussion}\label{app:discuss}

\subsection{Theoretical Justification via Flat Reward Landscapes}
Recent theoretical studies~\cite{lee2025flat} suggest that a robust reward function should exhibit a flat reward landscape in the policy parameter space to ensure robustness against action or reward perturbations. In the context of Large Language Model (LLM) preference optimization, when two generated responses are semantically near-identical (i.e., with extremely high semantic similarity $S$), they essentially represent minor perturbations in the action space. Consequently, an ideal and well-calibrated preference oracle should maintain a flat evaluation landscape across these local variations, naturally outputting a neutral preference probability close to 0.5.In practice, however, uncalibrated reward models severely violate this property, frequently assigning extreme win rates to semantically indistinguishable response pairs. This miscalibration injects sharp, high-magnitude reward gradients onto a landscape that should theoretically be flat, thereby destabilizing the self-play optimization trajectory. \textit{S}-SPPO directly rectifies this fundamental flaw by progressively annealing the preference signal toward the maximum-entropy baseline (0.5) as semantic similarity increases. This strictly enforces the theoretically required flatness while preserving effective gradients for response pairs that exhibit genuine quality differences.

\subsection{Safety Mechanism and Reasoning Domain Performance}
A potential concern with incorporating general-purpose semantic encoders into preference optimization is their lack of fine-grained, token-level sensitivity in rigorous domains such as mathematical reasoning or coding. In these fields, a minor syntactic shift (e.g., a single negation sign) can completely alter the underlying utility. \textit{S}-SPPO naturally guards against this risk through its structural safety design: the preference oracle exclusively dictates the optimization direction (i.e., which response wins), while the semantic calibration only conservatively scales the gradient magnitude toward $0.5$. In the worst-case scenario, where the encoder mistakenly identifies two mathematically distinct responses as highly similar, \textit{S}-SPPO merely dampens the training signal, leading to a more conservative optimization update rather than reversing the direction of optimization.

To empirically validate this safety property, we evaluate our model checkpoints on the GSM8K benchmark using strict-match accuracy. As shown in Table~\ref{tab:gsm8k}, \textit{S}-SPPO not only preserves but slightly improves mathematical reasoning capabilities across different base models. This conclusively demonstrates that our semantic calibration does not induce vulnerabilities or performance degradation in reasoning tasks.

\begin{table}[ht]
\centering
\caption{GSM8K Benchmark Performance (Strict-match)}
\label{tab:gsm8k}
\begin{tabular}{lc}
\toprule
\textbf{Model} & \textbf{GSM8K (Strict-match)} \\
\midrule
Mistral-7B Base & 42.8\% \\
Mistral-7B SPPO Iter3 & 44.0\% \\
\textbf{Mistral-7B \textit{S}-SPPO Iter3} & \textbf{44.2\%} \\
\midrule
Llama-3-8B Base & 76.4\% \\
Llama-3-8B SPPO Iter3 & 78.2\% \\
\textbf{Llama-3-8B \textit{S}-SPPO Iter3} & \textbf{78.9\%} \\
\bottomrule
\end{tabular}
\end{table}

\subsection{Scope and Domain-Specific Limitations}
While our empirical results demonstrate that S-SPPO maintains safety on reasoning benchmarks, we explicitly scope our core contribution to general natural language preference alignment tasks. Applying this framework to highly rigorous domains, such as advanced mathematics or complex code generation, introduces distinct challenges that warrant careful consideration. Most off-the-shelf external embedding models are inherently optimized for general natural language semantics. Consequently, they often lack the fine-grained, token-level sensitivity required for these specialized tasks, where a minimal syntactic shift, such as a single negation sign or a modified conditional statement, can drastically alter the underlying semantic meaning and functional utility. Therefore, extending our framework to these specialized domains, potentially through the integration of domain-specific encoders or syntax-aware calibration metrics, remains an important avenue for future investigation.

\subsection{The Necessity of Semantic Calibration with Advanced Reward Models}
While state-of-the-art reward models can partially mitigate calibration issues, frameworks like \textit{S}-SPPO remain fundamentally necessary for real-world alignment pipelines due to three primary considerations. First, regarding the inherent gap to a perfect oracle, current reward models still exhibit overconfidence and calibration issues when evaluating semantically similar responses. A theoretically flawless preference oracle is hypothetical; in practice, human preferences are intrinsically noisy, making explicit semantic calibration a necessary structural safeguard against persistent miscalibration. Second, \textit{S}-SPPO inherently aligns with the premise of the SPPO paradigm. The fundamental motivation behind Self-Play Preference Optimization is that human preferences are profoundly challenging to model perfectly. If a flawless oracle existed, algorithmic distinctions between various advanced preference optimization methods would largely diminish. \textit{S}-SPPO is designed specifically to address and correct these inevitable imperfections of practical reward models. Finally, computational efficiency in iterative self-play remains a critical bottleneck. Iterative self-play requires generating $K$ candidates and performing $O(K^2)$ pairwise evaluations per prompt. Scaling this quadratic complexity with massive state-of-the-art models, such as 100B parameter models or LLM as a judge pipelines, over extensive datasets is computationally prohibitive. \textit{S}-SPPO resolves this performance and computation tradeoff by enabling highly efficient and lightweight reward models, such as the 0.4B parameter PairRM, to achieve robust alignment, making extensive self-play scaling computationally feasible without astronomical inference costs.

\subsection{Future Work}
While \textit{S}-SPPO demonstrates the effectiveness of dual-space semantic calibration, our exploration opens several avenues for future research. First, regarding the regularization weight $\lambda$, our current implementation maintains a fixed value across all iterations to demonstrate the robustness of the method without extensive hyperparameter tuning. However, it is plausible that a dynamic schedule of $\lambda$ could further refine the equilibrium state (we observed some gain on this), though we prioritized simplicity and establishing a proof-of-concept over heavy tuning in this work. Second, we present a specific instantiation of calibration via linear semantic gating and Euclidean latent repulsion, yet this is merely one possibility within a broader design space. Future work could explore alternative calibration mechanisms, such as non-linear gating functions, uncertainty-aware calibration leveraging epistemic uncertainty, or kernel-based measures to capture more complex semantic dependencies. Finally, our current framework utilizes a fixed, off-the-shelf semantic encoder to guide the policy's latent geometry. A promising direction would be to more explicitly model the relationship between the external semantic space and the model's internal representation, potentially by learning a projection that aligns the encoder's embedding manifold directly with the policy's latent space or by fine-tuning the encoder end-to-end to better capture the nuances of the specific task distribution.

%% file: tables/aba-2.tex

\begin{table}[h]
\centering
\caption{Computational efficiency comparison between SPPO and S-SPPO on Llama-3-8B per iteration. The \textbf{Total} runtime includes the dominant \textit{Generation \& Annotation} phases (Steps 3--5 in Alg.~\ref{alg:s_sppo}), which are identical for both methods ($\sim$49 mins).}
\resizebox{0.5\textwidth}{!}{%
\begin{tabular}{lccc}
\toprule
{Phase} & {\makecell{Preference\\Construction}} & {\makecell{Policy\\Optimization}} & {\makecell{Total\\Pipeline}} \\ 
\cmidrule(lr){2-2} \cmidrule(lr){3-3} \cmidrule(lr){4-4} 
\textit{Alg. Steps} & \textit{(Steps 5--7)} & \textit{(Step 8)} & \textit{(Steps 3--7)} \\ 
\midrule
\multicolumn{4}{l}{\textit{\textbf{Runtime}}} \\
SPPO (Baseline) & $\sim$ 90s & $\sim$ 16 min & $\sim$ 66 min \\
\textit{S}-SPPO (Ours) & $\sim$ 95s & $\sim$ 16 min & $\sim$ 66 min \\ 
\midrule
\multicolumn{4}{l}{\textit{\textbf{Peak GPU Memory}}} \\
SPPO (Baseline) & $\sim$ 0 GB & 67.0 GB & -- \\
\textit{S}-SPPO (Ours) & $\sim$ 1.7 GB & 72.8 GB & -- \\ 
\bottomrule
\end{tabular}%
}\label{tab:efficiency}
\end{table}

%% file: example_paper.bib
@article{ouyang2022training,
  title={Training language models to follow instructions with human feedback},
  author={Ouyang, Long and Wu, Jeffrey and Jiang, Xu and Almeida, Diogo and Wainwright, Carroll and Mishkin, Pamela and Zhang, Chong and Agarwal, Sandhini and Slama, Katarina and Ray, Alex and others},
  journal={Advances in neural information processing systems},
  volume={35},
  pages={27730--27744},
  year={2022}
}

@article{bai2022training,
  title={Training a helpful and harmless assistant with reinforcement learning from human feedback},
  author={Bai, Yuntao and Jones, Andy and Ndousse, Kamal and Askell, Amanda and Chen, Anna and DasSarma, Nova and Drain, Dawn and Fort, Stanislav and Ganguli, Deep and Henighan, Tom and others},
  journal={arXiv preprint arXiv:2204.05862},
  year={2022}
}

@article{wang2020minilm,
  title={Minilm: Deep self-attention distillation for task-agnostic compression of pre-trained transformers},
  author={Wang, Wenhui and Wei, Furu and Dong, Li and Bao, Hangbo and Yang, Nan and Zhou, Ming},
  journal={Advances in neural information processing systems},
  volume={33},
  pages={5776--5788},
  year={2020}
}

@article{touvron2023llama,
  title={Llama: Open and efficient foundation language models},
  author={Touvron, Hugo and Lavril, Thibaut and Izacard, Gautier and Martinet, Xavier and Lachaux, Marie-Anne and Lacroix, Timoth{\'e}e and Rozi{\`e}re, Baptiste and Goyal, Naman and Hambro, Eric and Azhar, Faisal and others},
  journal={arXiv preprint arXiv:2302.13971},
  year={2023}
}

@article{nussbaum2024nomic,
  title={Nomic embed: Training a reproducible long context text embedder},
  author={Nussbaum, Zach and Morris, John X and Duderstadt, Brandon and Mulyar, Andriy},
  journal={arXiv preprint arXiv:2402.01613},
  year={2024}
}

@article{gunther2023jina,
  title={Jina embeddings 2: 8192-token general-purpose text embeddings for long documents},
  author={G{\"u}nther, Michael and Ong, Jackmin and Mohr, Isabelle and Abdessalem, Alaeddine and Abel, Tanguy and Akram, Mohammad Kalim and Guzman, Susana and Mastrapas, Georgios and Sturua, Saba and Wang, Bo and others},
  journal={arXiv preprint arXiv:2310.19923},
  year={2023}
}

@inproceedings{sun2024aligning,
  title={Aligning large multimodal models with factually augmented rlhf},
  author={Sun, Zhiqing and Shen, Sheng and Cao, Shengcao and Liu, Haotian and Li, Chunyuan and Shen, Yikang and Gan, Chuang and Gui, Liangyan and Wang, Yu-Xiong and Yang, Yiming and others},
  booktitle={Findings of the Association for Computational Linguistics: ACL 2024},
  pages={13088--13110},
  year={2024}
}

@article{chen2025aha,
  title={AHA: Aligning Large Audio-Language Models for Reasoning Hallucinations via Counterfactual Hard Negatives},
  author={Chen, Yanxi and Zhu, Wenhui and Chen, Xiwen and Wang, Zhipeng and Li, Xin and Qiu, Peijie and Wang, Hao and Dong, Xuanzhao and Xiong, Yujian and Schneider, Anderson and others},
  journal={arXiv preprint arXiv:2512.24052},
  year={2025}
}

@article{christiano2017deep,
  title={Deep reinforcement learning from human preferences},
  author={Christiano, Paul F and Leike, Jan and Brown, Tom and Martic, Miljan and Legg, Shane and Amodei, Dario},
  journal={Advances in neural information processing systems},
  volume={30},
  year={2017}
}

@article{schulman2017proximal,
  title={Proximal policy optimization algorithms},
  author={Schulman, John and Wolski, Filip and Dhariwal, Prafulla and Radford, Alec and Klimov, Oleg},
  journal={arXiv preprint arXiv:1707.06347},
  year={2017}
}

@inproceedings{azar2024general,
  title={A general theoretical paradigm to understand learning from human preferences},
  author={Azar, Mohammad Gheshlaghi and Guo, Zhaohan Daniel and Piot, Bilal and Munos, Remi and Rowland, Mark and Valko, Michal and Calandriello, Daniele},
  booktitle={International Conference on Artificial Intelligence and Statistics},
  pages={4447--4455},
  year={2024},
  organization={PMLR}
}

@article{rafailov2023direct,
  title={Direct preference optimization: Your language model is secretly a reward model},
  author={Rafailov, Rafael and Sharma, Archit and Mitchell, Eric and Manning, Christopher D and Ermon, Stefano and Finn, Chelsea},
  journal={Advances in neural information processing systems},
  volume={36},
  pages={53728--53741},
  year={2023}
}

@article{bradley1952rank,
  title={Rank analysis of incomplete block designs: I. the method of paired comparisons},
  author={Bradley, Ralph Allan and Terry, Milton E},
  journal={Biometrika},
  volume={39},
  number={3/4},
  pages={324--345},
  year={1952},
  publisher={JSTOR}
}

@article{dubois2024alpacafarm,
  title={Alpacafarm: A simulation framework for methods that learn from human feedback},
  author={Dubois, Yann and Li, Chen Xuechen and Taori, Rohan and Zhang, Tianyi and Gulrajani, Ishaan and Ba, Jimmy and Guestrin, Carlos and Liang, Percy S and Hashimoto, Tatsunori B},
  journal={Advances in Neural Information Processing Systems},
  volume={36},
  pages={30039--30069},
  year={2023}
}

@inproceedings{cui2023ultrafeedback,
  title={ULTRAFEEDBACK: Boosting Language Models with Scaled AI Feedback},
  author={Cui, Ganqu and Yuan, Lifan and Ding, Ning and Yao, Guanming and He, Bingxiang and Zhu, Wei and Ni, Yuan and Xie, Guotong and Xie, Ruobing and Lin, Yankai and others},
  booktitle={Forty-first International Conference on Machine Learning}
}

@misc{jiang2023mistral,
      title={Mistral 7B}, 
      author={Albert Q. Jiang and Alexandre Sablayrolles and Arthur Mensch and Chris Bamford and Devendra Singh Chaplot and Diego de las Casas and Florian Bressand and Gianna Lengyel and Guillaume Lample and Lucile Saulnier and Lélio Renard Lavaud and Marie-Anne Lachaux and Pierre Stock and Teven Le Scao and Thibaut Lavril and Thomas Wang and Timothée Lacroix and William El Sayed},
      year={2023},
      eprint={2310.06825},
      archivePrefix={arXiv},
      primaryClass={cs.CL},
      url={https://arxiv.org/abs/2310.06825}, 
}

@article{sturua2024jina,
  title={jina-embeddings-v3: Multilingual embeddings with task lora},
  author={Sturua, Saba and Mohr, Isabelle and Akram, Mohammad Kalim and G{\"u}nther, Michael and Wang, Bo and Krimmel, Markus and Wang, Feng and Mastrapas, Georgios and Koukounas, Andreas and Wang, Nan and others},
  journal={arXiv preprint arXiv:2409.10173},
  year={2024}
}

@article{freund1999adaptive,
  title={Adaptive game playing using multiplicative weights},
  author={Freund, Yoav and Schapire, Robert E},
  journal={Games and Economic Behavior},
  volume={29},
  number={1-2},
  pages={79--103},
  year={1999},
  publisher={Elsevier}
}

@inproceedings{
wu2024self,
title={Self-Play Preference Optimization for Language Model Alignment},
author={Yue Wu and Zhiqing Sun and Huizhuo Yuan and Kaixuan Ji and Yiming Yang and Quanquan Gu},
booktitle={The Thirteenth International Conference on Learning Representations},
year={2025},
url={https://openreview.net/forum?id=a3PmRgAB5T}
}

@article{zhao2023slic,
  title={Slic-hf: Sequence likelihood calibration with human feedback},
  author={Zhao, Yao and Joshi, Rishabh and Liu, Tianqi and Khalman, Misha and Saleh, Mohammad and Liu, Peter J},
  journal={arXiv preprint arXiv:2305.10425},
  year={2023}
}

@article{ethayarajh2024kto,
  title={Kto: Model alignment as prospect theoretic optimization},
  author={Ethayarajh, Kawin and Xu, Winnie and Muennighoff, Niklas and Jurafsky, Dan and Kiela, Douwe},
  journal={arXiv preprint arXiv:2402.01306},
  year={2024}
}

@inproceedings{liu2024statistical,
  title={Statistical rejection sampling improves preference optimization},
  author={Liu, Tianqi and Zhao, Yao and Joshi, Rishabh and Khalman, Misha and Saleh, Mohammad and Liu, Peter and Liu, Jialu},
  booktitle={International conference on learning representations},
  volume={2024},
  pages={54605--54624},
  year={2024}
}

@article{meng2024simpo,
  title={Simpo: Simple preference optimization with a reference-free reward},
  author={Meng, Yu and Xia, Mengzhou and Chen, Danqi},
  journal={Advances in Neural Information Processing Systems},
  volume={37},
  pages={124198--124235},
  year={2024}
}

@article{li2020tilted,
  title={Tilted empirical risk minimization},
  author={Li, Tian and Beirami, Ahmad and Sanjabi, Maziar and Smith, Virginia},
  journal={arXiv preprint arXiv:2007.01162},
  year={2020}
}

@inproceedings{kim2025sdpo,
  title={sdpo: Don’t use your data all at once},
  author={Kim, Dahyun and Kim, Yungi and Song, Wonho and Kim, Hyeonwoo and Kim, Yunsu and Kim, Sanghoon and Park, Chanjun},
  booktitle={Proceedings of the 31st International Conference on Computational Linguistics: Industry Track},
  pages={366--373},
  year={2025}
}

@article{rosset2024direct,
  title={Direct nash optimization: Teaching language models to self-improve with general preferences},
  author={Rosset, Corby and Cheng, Ching-An and Mitra, Arindam and Santacroce, Michael and Awadallah, Ahmed and Xie, Tengyang},
  journal={arXiv preprint arXiv:2404.03715},
  year={2024}
}

@inproceedings{yuan2024self,
  title={Self-rewarding language models},
  author={Yuan, Weizhe and Pang, Richard Yuanzhe and Cho, Kyunghyun and Li, Xian and Sukhbaatar, Sainbayar and Xu, Jing and Weston, Jason E},
  booktitle={Forty-first International Conference on Machine Learning},
  year={2024}
}

@article{dubois2024length,
  title={Length-controlled alpacaeval: A simple way to debias automatic evaluators},
  author={Dubois, Yann and Galambosi, Bal{\'a}zs and Liang, Percy and Hashimoto, Tatsunori B},
  journal={arXiv preprint arXiv:2404.04475},
  year={2024}
}

@inproceedings{
li2024crowdsourced,
title={From Crowdsourced Data to High-quality Benchmarks: Arena-Hard and Benchbuilder Pipeline},
author={Tianle Li and Wei-Lin Chiang and Evan Frick and Lisa Dunlap and Tianhao Wu and Banghua Zhu and Joseph E. Gonzalez and Ion Stoica},
booktitle={Forty-second International Conference on Machine Learning},
year={2025},
url={https://openreview.net/forum?id=KfTf9vFvSn}
}

@article{zheng2024judging,
  title={Judging llm-as-a-judge with mt-bench and chatbot arena},
  author={Zheng, Lianmin and Chiang, Wei-Lin and Sheng, Ying and Zhuang, Siyuan and Wu, Zhanghao and Zhuang, Yonghao and Lin, Zi and Li, Zhuohan and Li, Dacheng and Xing, Eric and others},
  journal={Advances in neural information processing systems},
  volume={36},
  pages={46595--46623},
  year={2023}
}

@inproceedings{pasztor2025stackelberg,
  title={Stackelberg Learning from Human Feedback: Preference Optimization as a Sequential Game},
  author={P{\'a}sztor, Barna and Buening, Thomas Kleine and Krause, Andreas},
  booktitle={NeurIPS 2025 Workshop: Second Workshop on Aligning Reinforcement Learning Experimentalists and Theorists},
  year={2025}
}

@article{tang2025game,
  title={Game-Theoretic Regularized Self-Play Alignment of Large Language Models},
  author={Tang, Xiaohang and Yoon, Sangwoong and Son, Seongho and Yuan, Huizhuo and Gu, Quanquan and Bogunovic, Ilija},
  journal={arXiv preprint arXiv:2503.00030},
  year={2025}
}

@article{ji2024self,
  title={Self-play with adversarial critic: Provable and scalable offline alignment for language models},
  author={Ji, Xiang and Kulkarni, Sanjeev and Wang, Mengdi and Xie, Tengyang},
  journal={arXiv preprint arXiv:2406.04274},
  year={2024}
}

@inproceedings{le2025token,
  title={Token-Level Self-Play with Importance-Aware Guidance for Large Language Models},
  author={Le, Tue and Vuong, Hoang Tran and Tran, Quyen and Van, Linh Ngo and Harandi, Mehrtash and Le, Trung},
  booktitle={The Thirty-ninth Annual Conference on Neural Information Processing Systems},
  year={2025}
}

@inproceedings{xiang2025self,
  title={Self-steering optimization: Autonomous preference optimization for large language models},
  author={Xiang, Hao and Yu, Bowen and Lin, Hongyu and Lu, Keming and Lu, Yaojie and Han, Xianpei and He, Ben and Sun, Le and Zhou, Jingren and Lin, Junyang},
  booktitle={Findings of the Association for Computational Linguistics: ACL 2025},
  pages={9073--9085},
  year={2025}
}

@inproceedings{lee2025flat,
  title={Flat reward in policy parameter space implies robust reinforcement learning},
  author={Lee, Hyun Kyu and Yoon, Sung Whan},
  booktitle={The Thirteenth International Conference on Learning Representations},
  year={2025}
}

@article{casper2023open,
  title={Open Problems and Fundamental Limitations of Reinforcement Learning from Human Feedback},
  author={Casper, Stephen and Davies, Xander and Shi, Claudia and Krendl Gilbert, Thomas and Scheurer, J{\'e}r{\'e}my and Rando Ramirez, Javier and Freedman, Rachel and Korbak, Tomasz and Lindner, David and Freire, Pedro and others},
  journal={Transactions on Machine Learning Research},
  year={2023},
  publisher={OpenReview}
}

@inproceedings{chen2024self,
  title={Self-play fine-tuning convertsweak language models to strong language models},
  author={Chen, Zixiang and Deng, Yihe and Yuan, Huizhuo and Ji, Kaixuan and Gu, Quanquan},
  booktitle={Proceedings of the 41st International Conference on Machine Learning},
  pages={6621--6642},
  year={2024}
}

@article{dong2024rlhf,
  title={RLHF Workflow: From Reward Modeling to Online RLHF A Comprehensive Practical Alignment Recipe of Iterative Preference Learning},
  author={Dong, Hanze and Xiong, Wei and Pang, Bo and Wang, Haoxiang and Zhao, Han and Zhou, Yingbo and Jiang, Nan and Sahoo, Doyen and Xiong, Caiming and Zhang, Tong},
  journal={Transactions on Machine Learning Research},
  volume={2024},
  year={2024},
  publisher={Transactions on Machine Learning Research}
}

@inproceedings{engstrom2020implementation,
  title={Implementation Matters in Deep Policy Gradients: A Case Study on PPO and TRPO},
  author={Engstrom, Logan and Ilyas, Andrew and Santurkar, Shibani and Tsipras, Dimitris and Janoos, Firdaus and Rudolph, Larry and Madry, Aleksander},
  booktitle={International Conference on Learning Representations},
  year={2020}
}

@inproceedings{hong2024orpo,
  title={Orpo: Monolithic preference optimization without reference model},
  author={Hong, Jiwoo and Lee, Noah and Thorne, James},
  booktitle={Proceedings of the 2024 Conference on Empirical Methods in Natural Language Processing},
  pages={11170--11189},
  year={2024}
}

@inproceedings{jiang2023llm,
  title={Llm-blender: Ensembling large language models with pairwise ranking and generative fusion},
  author={Jiang, Dongfu and Ren, Xiang and Lin, Bill Yuchen},
  booktitle={Proceedings of the 61st Annual Meeting of the Association for Computational Linguistics (Volume 1: Long Papers)},
  pages={14165--14178},
  year={2023}
}

@inproceedings{wang2025magnetic,
  title={Magnetic preference optimization: Achieving last-iterate convergence for language model alignment},
  author={Wang, Mingzhi and Ma, Chengdong and Chen, Qizhi and Meng, Linjian and Han, Yang and Xiao, Jiancong and Zhang, Zhaowei and Huo, Jing and Su, Weijie and Yang, Yaodong},
  booktitle={International Conference on Learning Representations},
  volume={2025},
  pages={35759--35790},
  year={2025}
}

@inproceedings{xiong2024iterative,
  title={Iterative Preference Learning from Human Feedback: Bridging Theory and Practice for RLHF under KL-constraint},
  author={Xiong, Wei and Dong, Hanze and Ye, Chenlu and Wang, Ziqi and Zhong, Han and Ji, Heng and Jiang, Nan and Zhang, Tong},
  booktitle={Forty-first International Conference on Machine Learning},
  year={2024}
}
